\documentclass[11pt]{article}

\usepackage{amssymb,amsmath,amsthm,bbm}
\usepackage{verbatim,float,url,dsfont}
\usepackage{graphicx,subcaption,psfrag}
\usepackage{algorithm}
\usepackage[noend]{algorithmic}
\usepackage{mathtools,enumitem}
\usepackage{multirow}
\usepackage{ragged2e}
\usepackage{xr-hyper}
\usepackage{caption}
\usepackage{array}

\usepackage[utf8]{inputenc} 
\usepackage[T1]{fontenc}    
\usepackage{booktabs}       
\usepackage{nicefrac}         
\usepackage{microtype}      

\usepackage[dvipsnames]{xcolor} 
\definecolor{hunyuanblue}{HTML}{0052D9} 
\usepackage{pifont} 
\usepackage{etoc} 
\usepackage{tikz} 
\usepackage{pgfplots}  
\pgfplotsset{compat=1.16}  

\ifdefined\TimesFont 
\usepackage{times} 
\fi

\ifdefined\ParSkip 
\usepackage{parskip} 
\fi

\newtheorem*{assumption*}{\assumptionnumber}
\providecommand{\assumptionnumber}{}


\makeatletter
\newcommand*\rel@kern[1]{\kern#1\dimexpr\macc@kerna}
\newcommand*\widebar[1]{%
  \begingroup
  \def\mathaccent##1##2{%
    \rel@kern{0.8}%
    \overline{\rel@kern{-0.8}\macc@nucleus\rel@kern{0.2}}%
    \rel@kern{-0.2}%
  }%
  \macc@depth\@ne
  \let\math@bgroup\@empty \let\math@egroup\macc@set@skewchar
  \mathsurround\z@ \frozen@everymath{\mathgroup\macc@group\relax}%
  \macc@set@skewchar\relax
  \let\mathaccentV\macc@nested@a
  \macc@nested@a\relax111{#1}%
  \endgroup
}
\makeatother

\def\E{\mathbb{E}}

\def\ind#1{\mathds{1}\left\{#1\right\}}

\usepackage{hunyuan}

\usepackage[capitalize,noabbrev]{cleveref}
\usepackage{tabularx}
\usepackage[most]{tcolorbox}
\usepackage{wrapfig}
\usepackage{xspace}
\usepackage{fancyhdr}
\usepackage{xurl}
\usepackage{placeins}   
\usepackage{colortbl}   

\definecolor{ourshl}{HTML}{E4EFF8}   
\definecolor{grouphl}{HTML}{EAEAEA}  
\newcommand{\tbest}[1]{\textbf{#1}}
\newcommand{\tsecond}[1]{\underline{#1}}
\newcommand{\resulttablespacing}{\renewcommand{\arraystretch}{1.25}}
\newcolumntype{C}{>{\centering\arraybackslash}X}

\makeatletter
\long\def\@makecaption#1#2{%
  \vskip 10pt
  \setbox\@tempboxa\hbox{#1: #2}%
  \ifdim \wd\@tempboxa >\hsize
    \noindent #1: #2\par
  \else
    \hbox to\hsize{\hfil\box\@tempboxa\hfil}%
  \fi}
\makeatother

\hypersetup{
  colorlinks=true,
  linkcolor=hunyuanblue,
  citecolor=hunyuanblue,
  urlcolor=hunyuanblue,
}

\makeatletter
\def\section{\@startsection{section}{1}{\z@}{-0.24in}{0.10in}
             {\large\bf\raggedright\color{hunyuanblue}}}
\def\subsection{\@startsection{subsection}{2}{\z@}{-0.20in}{0.08in}
                {\normalsize\bf\raggedright\color{hunyuanblue}}}
\makeatother

\fancypagestyle{plain}{%
  \fancyhf{}%
  \fancyfoot[C]{\thepage}%
}

\fancypagestyle{firststyle}{%
  \fancyhf{}%
  \fancyhead[L]{\raisebox{-18.5mm}[0pt][0pt]{%
    \includegraphics[height=9mm]{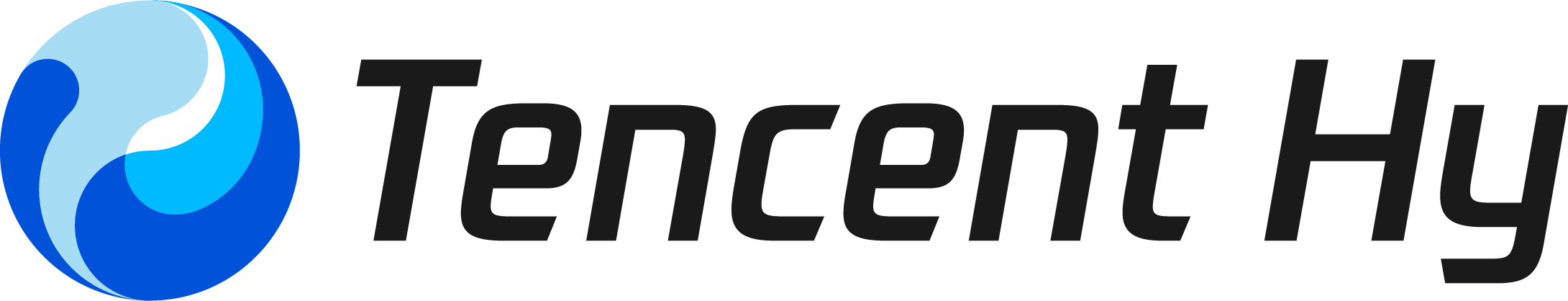}}}%
  \fancyfoot[C]{\thepage}%
}

\definecolor{abstractbg}{HTML}{F0F7FC}
\renewenvironment{abstract}{%
  \begin{tcolorbox}[%
    colframe=abstractbg, colback=abstractbg,
    boxrule=0pt, arc=2mm, enhanced,
    top=8pt, bottom=8pt, left=15pt, right=15pt,
    width=\textwidth]%
  \textbf{Abstract}\quad\ignorespaces%
}{\end{tcolorbox}}

\newcommand{\CPPO}{CPPO}
\newcommand{\CPPOSHORT}{\ifmmode\mathrm{CPPO}\else {CPPO}\fi}
\newcommand{\CPPOEXPAND}{\textbf{C}umulative \textbf{P}refix-divergence \textbf{P}olicy \textbf{O}ptimization}

\newcommand{\DPPO}{DPPO}
\newcommand{\TRM}{TRM}
\newcommand{\GRPO}{GRPO}
\newcommand{\PPO}{PPO}
\newcommand{\CISPO}{CISPO}
\newcommand{\MinPro}{MinPRO}
\newcommand{\SAPO}{S\kern-0.04em APO}
\newcommand{\DTV}{D_{\mathrm{TV}}}

\newcommand{\Db}{\delta_b}
\newcommand{\wmin}{w_{\min}}
\def\ind{\mathbf{1}}
\newcommand{\panelgraphics}[3]{%
  \begin{minipage}[t]{#1\linewidth}%
  \centering
  \includegraphics[width=\linewidth]{#3}%
  \end{minipage}%
}
\newcommand{\tightfigtop}{\vspace{-0.5em}}
\newsavebox{\figoneteaserbox}

\graphicspath{{figures/}}

\begin{document}

\thispagestyle{firststyle}
\vspace*{0.25cm}
{\color{hunyuanblue}\hrule height 0.6pt}
\vskip 3mm
{\centering
{\LARGE\bfseries Beyond Uniform Token-Level Trust Region in LLM\\[3pt]
Reinforcement Learning\par}
}
\vskip 4mm
{\color{hunyuanblue}\hrule height 0.6pt}
\vskip 6mm
\begin{center}
\textbf{Renjie Mao}$^{*}$ \quad
\textbf{Xiangxin Zhou}$^{*\dagger}$ \quad
\textbf{Lvfang Tao}$^{*\dagger}$ \quad
\textbf{Yixin Ding} \quad
\textbf{Yu Shi} \\[4pt]
\textbf{Yongguang Lin} \quad
\textbf{Yuheng Wu} \quad
\textbf{Honglin Zhu} \quad
\textbf{Qian Qiu} \quad
\textbf{Wenxi Zhu}$^{\dagger}$
\\[8pt]
Tencent Hunyuan
\\[6pt]
{\small $^{*}$Equal contribution \quad $^{\dagger}$Corresponding author}
\end{center}
\begin{abstract}
Reinforcement learning with verifiable rewards (RLVR) has become standard for improving LLM reasoning. 
However, existing \PPO{}-style trust-region mechanisms remain position-agnostic by enforcing
uniform thresholds across all tokens independently. This pointwise treatment conflicts with autoregressive
generation in two critical ways. First, uniform thresholds ignore autoregressive asymmetry. Early-stage deviations produce compounding sequence-level drift, causing
static thresholds to under-regulate early divergence
and excessively constrain late-stage exploration. Second, evaluating token-level divergence in isolation 
overlooks cumulative prefix drift, granting the same divergence allowance regardless of how far the 
conditioning history has already deviated from the rollout policy.
To address this limitation, we propose \CPPO{} (\emph{\CPPOEXPAND{}}), a token-level masking rule that aligns
updates with a finite-horizon policy-improvement bound via two coupled mechanisms. First, a position-weighted 
threshold imposes stricter limits at early positions whose effects persist longer, relaxing constraints 
for late-stage tokens. Second, a cumulative prefix budget tracks historical deviations, dynamically 
restricting further token-level deviation to prevent compounding errors along the prefix. Empirically, \CPPO{} 
enhances training stability and significantly improves reasoning accuracy across various model scales.

\vspace{0.3em} 
\noindent
{\small
\textbf{Correspondence:} \texttt{wenxizhu@tencent.com, ltao@pku.edu.cn, zhouxiangxin1998@gmail.com}
\\[2pt]
\textbf{Project Page:} \url{https://hunyuan-cppo.github.io}
\par} 
\end{abstract}

\begin{figure}[H]
\tightfigtop
\centering
\panelgraphics{0.64}{a}{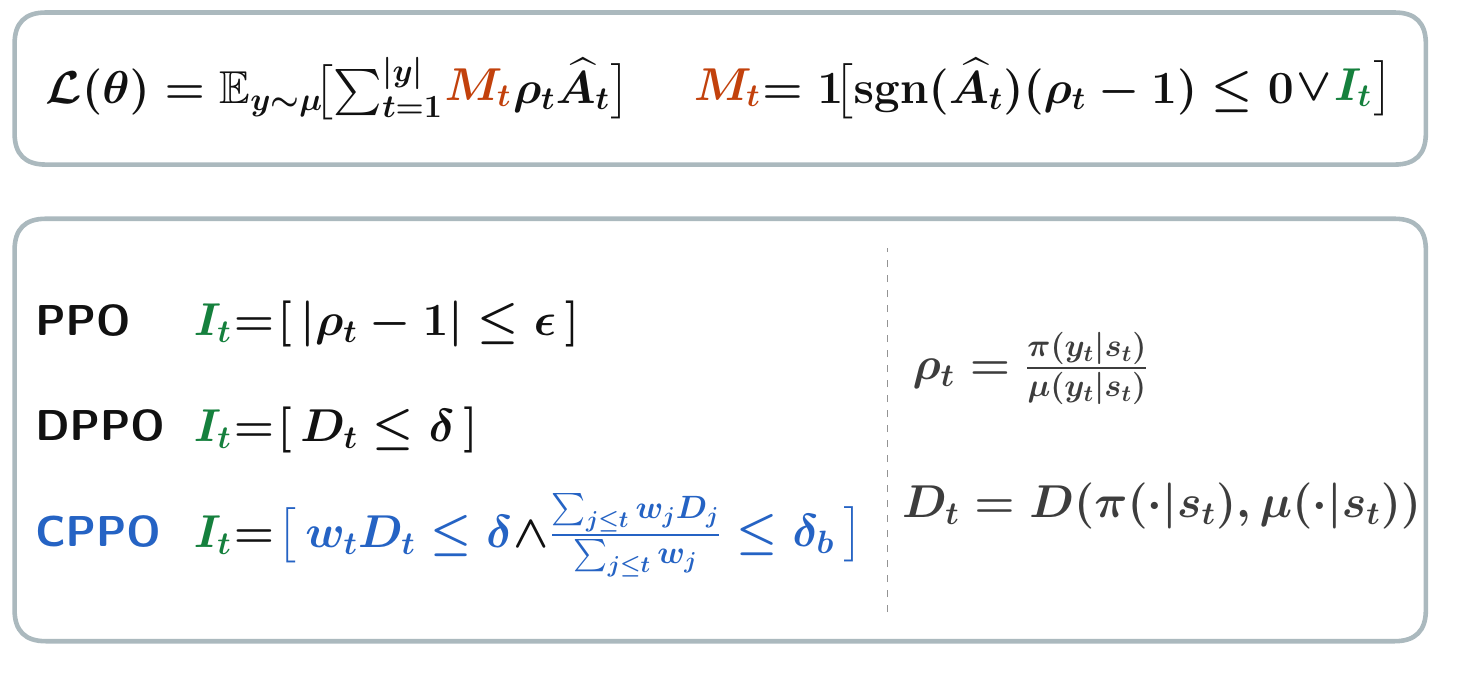}\hfill
\panelgraphics{0.36}{b}{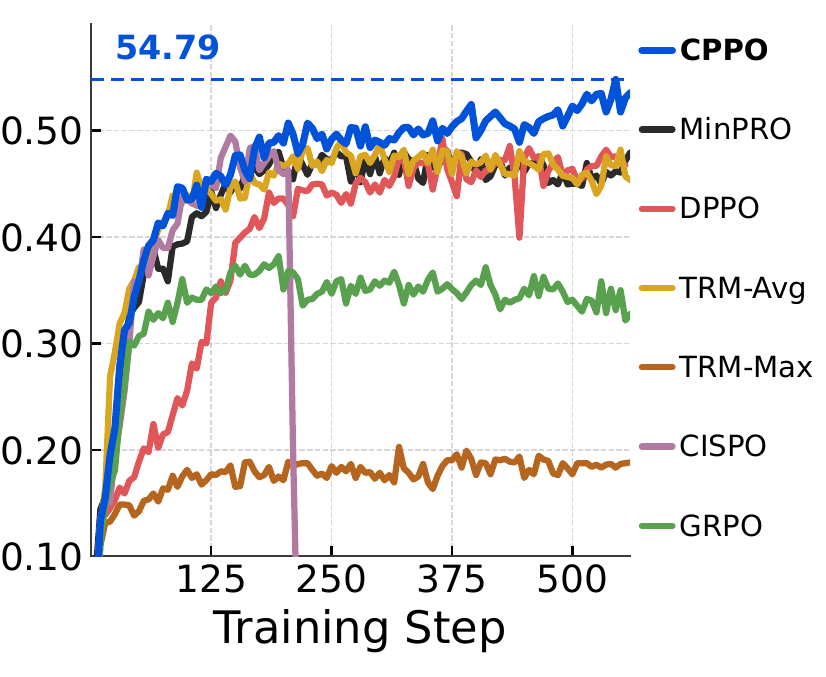}
\caption{Overview of \CPPO{}.
\textbf{Left}: token-level masking rules for
\PPO{}~\citep{schulman2017proximal}, \DPPO{}~\citep{qi2026rethinking}, and \CPPO{}.
\textbf{Right}: validation AIME24/25/26 $\mathrm{Avg}@16$ on
Qwen3-30B-A3B-Base. Full validation curves for the Base-model runs are shown in
Figure~\ref{fig:main-curves}.}
\label{fig:overview}
\end{figure}

\section{Introduction}
\label{sec:intro}

Reinforcement learning has become a standard tool for LLM
post-training, from preference-feedback alignment to
verifier-driven reasoning~\citep{DBLP:journals/corr/abs-2203-02155,rafailov2023direct,DBLP:journals/corr/abs-2402-03300,DBLP:journals/nature/GuoYZSWZXZMBZY025,yu2026dapo,liu2025understanding}.
In reinforcement learning with verifiable rewards (RLVR), the policy
generates responses, a verifier assigns a scalar reward, and the model
is updated with a \PPO{}/\GRPO{}-style token-level
objective~\citep{schulman2017proximal,DBLP:journals/corr/abs-2402-03300}.

A practical RLVR update is off-policy. Each batch of responses is
sampled from a fixed rollout policy $\mu$ and then reused for several
gradient steps, so the policy being optimized, $\pi$, steadily moves
away from the policy that generated the data. Unconstrained policy
optimization often leads to unstable updates and degraded reasoning
performance. Autoregressive generation further amplifies this divergence
because early token deviations alter the conditioning of all subsequent
steps.

To mitigate this drift, RLVR borrows the trust-region idea from
classical policy optimization. Trust Region Policy Optimization (TRPO)
constrains the divergence between successive policies and, in return,
guarantees monotonic
improvement~\citep{schulman2015trust,kakade2002approximately,achiam2017constrained,peters2010relative}.
Enforcing such a divergence exactly is expensive at LLM vocabulary
scale, so \PPO{} and \GRPO{} instead clip the likelihood ratio of the
sampled token~\citep{schulman2017proximal,DBLP:journals/corr/abs-2402-03300}.
For each token, this ratio is a single-sample Monte Carlo estimate of
the true divergence between $\mu$ and $\pi$; built from one sampled
token, it is noisy and has high
variance~\citep{DBLP:journals/corr/abs-1901-10314,DBLP:conf/uai/WangHT19,Engstrom2020Implementation,andrychowicz2021what}.
It is also poorly calibrated across the long-tailed LLM vocabulary: a
rare token can yield a large ratio and be over-penalized, while a
high-probability token can move substantial probability mass under a
ratio close to one and go under-penalized~\citep{qi2026rethinking}.
\DPPO{}~\citep{qi2026rethinking} replaces this estimate with a direct
measure of policy divergence. At each token it constrains the
total-variation (TV) divergence $D_t$ between the rollout policy $\mu$
and the target policy $\pi$, which reflects the actual change in the
next-token distribution rather than a single sampled outcome.

While these methods refine the measurement of token-level policy divergence,
they apply a uniform, position-agnostic threshold across all steps.
This pointwise constraint conflicts with the autoregressive factorization of LLMs
by overlooking two structural properties of the generation process:
First, \textbf{uniform thresholds ignore the autoregressive asymmetry in error propagation.}
Because early tokens condition the entire subsequent generation, an identical
token-level divergence at earlier positions induces a larger sequence-level distribution shift.
By assigning a static threshold across all positions, 
uniform constraints inherently underestimate early-stage deviations,
while overly restricting late-stage exploration.
Second, \textbf{evaluating token-level divergence in isolation ignores cumulative prefix divergence.}
In LLM reinforcement learning, the state at step $t$ is defined by the generated prefix $s_t = (x, y_{<t})$.
If individual token deviations accumulate along this prefix, the model is effectively generating from a highly off-policy state.
When the conditioning prefix has already drifted significantly from the rollout policy, 
any further deviation at the current token carries compounding risk.
A uniform threshold fails to account for this historical context, 
granting the exact same room for divergence regardless of how far the prefix has already drifted.
Consequently, the trust region should dynamically tighten as off-policy drift accumulates along the prefix,
with the permissible divergence threshold for subsequent tokens proportionally reduced to prevent sequence-level collapse.

To address this limitation, we propose \CPPO{} (\emph{\CPPOEXPAND{}}),
a novel trust-region mechanism that directly aligns token-level updates with the autoregressive structure of LLMs. 
While recent advances focus on \textit{how} token-level divergence is measured, 
\CPPO{} addresses \textit{where} and \textit{how much} policy deviation is permitted to accumulate along a trajectory.
Specifically, rather than enforcing pointwise divergence limits in isolation,
\CPPO{} regularizes the policy update through two coupled constraints designed to mitigate the aforementioned structural mismatches. 
First, to account for autoregressive asymmetry, we introduce a \textbf{position-weighted token-level threshold.}
This mechanism enforces relatively conservative divergence limits at early positions
where deviations cascade into substantial sequence-level distribution shifts,
while relaxing the constraints for late-stage tokens to preserve exploration.
Second, to prevent compounding errors, we establish a \textbf{cumulative prefix budget.}
By tracking and budgeting the weighted average of divergences along the generated prefix, 
\CPPO{} dynamically restricts further updates once the historical context has drifted significantly from the rollout policy.
Theoretically, this dual-constraint approach explicitly bounds the finite-horizon error propagation, 
yielding a provably tighter policy-improvement bound compared to uniform, position-agnostic thresholding. 
Empirically, this structured allocation of the divergence budget significantly enhances training stability 
and reasoning accuracy across various model scales. 
Figure~\ref{fig:overview} summarizes the token-level masking conditions
and representative Qwen3-30B-A3B-Base validation curves.

This paper makes the following contributions:
\begin{itemize}[leftmargin=*]
    \item We formalize how token position enters the finite-horizon error bound: early token-level policy shifts affect longer suffixes and contribute more, explaining why uniform token-level divergence thresholds are loose for long responses.
    \item We introduce \CPPO{}, a token-level mask that follows the prefix-to-suffix generation order and constrains token-level divergence with a position-weighted token-level constraint and a cumulative prefix budget.
    \item We integrate \CPPO{} as a drop-in token mask and evaluate it under matched RLVR settings. Across varying model sizes, \CPPO{} obtains the best AIME24/25/26 average scores, with ablations supporting both constraints.
\end{itemize}

\section{Preliminaries}
\label{sec:theory}

Reinforcement learning for LLMs is a finite-horizon sequential decision
problem. Given a prompt $x\sim\mathcal D$, a policy $\pi$ generates a
response $y=(y_1,\dots,y_T)$ one token at a time. At step $t$ the state is
the prompt together with the tokens generated so far, $s_t=(x,y_{<t})$,
the action is the next token $y_t$, and the policy defines a conditional
distribution $\pi(y_t\mid s_t)$ over the vocabulary. The response
probability factorizes autoregressively as
$\pi(y\mid x)=\prod_{t=1}^T\pi(y_t\mid s_t)$. After the full response is
produced, a verifier returns a scalar reward $R(x,y)$ with
$|R(x,y)|\le\xi$, and the training objective is
$J(\pi)=\E_{x,y\sim\pi}[R(x,y)]$. An RLVR update is off-policy: responses
are drawn from a fixed rollout policy $\mu$ and reused to optimize a
target policy $\pi$ under common support.

Trust-region methods control how far the target policy $\pi$ may move from
the rollout policy $\mu$. Trust Region Policy Optimization (TRPO)
maximizes a surrogate objective subject to an explicit constraint on the
policy divergence, which guarantees monotonic improvement as long as
$\pi$ stays close to
$\mu$~\citep{schulman2015trust,kakade2002approximately,achiam2017constrained}.
The trust-region rules used in LLM RL are heuristic approximations of this
constrained update: \PPO{} and \GRPO{} replace the divergence constraint
with clipping of the sampled likelihood
ratio~\citep{schulman2017proximal,DBLP:journals/corr/abs-2402-03300},
\DPPO{} replaces the sampled ratio with a direct measure of the
token-level divergence~\citep{qi2026rethinking}, and \TRM{} applies a
similar divergence test at the sequence level~\citep{li2025trust}. This
section sets up the finite-horizon surrogate these methods share, the
token-level trust-region rules they impose, and how the token-level
divergence is computed in practice. Full proofs and extensions are in
Appendix~\ref{app:fulltheory}.

\subsection{Finite-horizon token-level surrogate}
\label{sec:identity}

We fix the rollout policy $\mu$ and optimize the target policy $\pi$ under
common support. Let
\[
\rho_t:=\frac{\pi(y_t\mid s_t)}{\mu(y_t\mid s_t)},\qquad
\rho_{a:b}:=\prod_{j=a}^b\rho_j,
\]
with $\rho_{T+1:T}=1$.

Following \citet{qi2026rethinking}, autoregressive factorization gives
the exact finite-horizon performance difference identity
(Lemma~\ref{lem:exact-id}, proved in Appendix~\ref{app:fulltheory})
\begin{equation}
J(\pi)-J(\mu)=L'_\mu(\pi)-\Delta(\mu,\pi),
\label{eq:exact-id}
\end{equation}
where
\[
\begin{aligned}
L'_\mu(\pi)
&:=\E_\mu\!\left[R(x,y)\sum_{t=1}^T(\rho_t-1)\right],\\
\Delta(\mu,\pi)
&:=\E_\mu\!\left[
R(x,y)\sum_{t=1}^T(\rho_t-1)(1-\rho_{t+1:T})
\right].
\end{aligned}
\]
The factor $\rho_{t+1:T}$ is the likelihood ratio of the future
tokens $y_{t+1:T}$ under $\pi$ relative to $\mu$, conditioned on
the prefix up to token $t$. Reverse telescoping gives the exact
corrected objective
$\E_\mu[R(x,y)\sum_t(\rho_t-1)\rho_{t+1:T}]$. The token-level
surrogate $L'_\mu$ sets this suffix correction to one. Trust-region
constraints are therefore tied to the surrogate itself because they
control the approximation error $|\Delta(\mu,\pi)|$ induced by
dropping the future likelihood-ratio correction.

\subsection{From sampled-ratio clipping to token-level divergence}
\label{sec:token-tr}

The clipped \PPO{}-style objective used by \PPO{} and \GRPO{} is
the standard practical implementation of the token-level surrogate. For
token advantage $\hat A_t$ and clip range $\epsilon$, the clipped
token objective is
\[
\widehat L_\mu^{\mathrm{\PPO}}(\pi)
=\E_\mu\!\left[
\sum_{t=1}^T
\min\!\left(
\rho_t\hat A_t,\,
\operatorname{clip}(\rho_t,1-\epsilon,1+\epsilon)\hat A_t
\right)
\right].
\]
Equivalently, its one-sided clipping rule can be written as the
sampled-ratio mask
\begin{equation}
M_t^{\mathrm{\PPO}}
=\ind\!\left[
\hat A_t(\rho_t-1)\le 0
\;\;\vee\;\;
|\rho_t-1|\le\epsilon
\right].
\label{eq:ppo-mask}
\end{equation}
This mask makes explicit the trust-region criterion implicit in the
clipped token objective. The decision is made from the ratio of the
sampled token; \GRPO{} changes how the advantage is estimated, but uses
the same signed clipping asymmetry.

The sampled ratio is a one-sample view of the token-level distributional
change. For each state $s_t$,
\[
\DTV(\mu(\cdot\mid s_t),\pi(\cdot\mid s_t))
=\frac12\,\E_{y_t\sim\mu(\cdot\mid s_t)}|\rho_t-1|.
\]
\DPPO{} replaces the sampled-ratio test in
Equation~\eqref{eq:ppo-mask} with a direct measure of the token-level
divergence $D_t=D(\mu(\cdot\mid s_t),\pi(\cdot\mid s_t))$:
\begin{equation}
M_t^{\mathrm{\DPPO}}
=\ind\!\left[
\hat A_t(\rho_t-1)\le 0
\;\;\vee\;\;
D_t\le\delta
\right].
\label{eq:dppo-mask}
\end{equation}
This change replaces the sampled-ratio trust-region criterion with a
distributional next-token divergence criterion, but it still assigns a
uniform token-level threshold $\delta$ across token positions.

\subsection{Token-level divergence approximation}
\label{sec:divergence-score}

A \DPPO{}-style mask needs the token-level divergence $D_t$ between the
next-token distributions $\mu(\cdot\mid s_t)$ and $\pi(\cdot\mid s_t)$,
whose exact value is the total variation
$D_t=\DTV(\mu(\cdot\mid s_t),\pi(\cdot\mid s_t))$. Evaluating this over the
full vocabulary at every token is computationally prohibitive at LLM scale, so we
follow the \DPPO{} Top-$K$ reduced-TV approximation~\citep{qi2026rethinking}
($K{=}20$; construction in Appendix~\ref{app:exp-details}). All token-level
trust-region methods in our experiments use this same approximation and a
per-model threshold scale, so that any difference between them reflects the
trust-region rule rather than the divergence approximation. By Pinsker's
inequality $\DTV\le\sqrt{D_{\mathrm{KL}}/2}$, the construction and the
bound in Section~\ref{sec:method} cover a KL-based constraint as well.

\subsection{Limitations of uniform token-level thresholds}
\label{sec:uniform-limits}

Although existing methods differ in their token-level divergence metrics,
they universally apply a static threshold $\delta$ across all positions.
This uniform constraint is globally suboptimal due to two properties of
autoregressive generation.

First, \textbf{a uniform threshold ignores how a token-level divergence
propagates along the response.} The surrogate error $\Delta(\mu,\pi)$ that
a trust region must control comes from the dropped suffix correction
$1-\rho_{t+1:T}$ (Section~\ref{sec:identity}), so a divergence introduced
at token $t$ enters the conditioning of every token sampled after it. Early deviations affect long suffixes, whereas late deviations have
minimal downstream impact. A uniform threshold fails to account for this
asymmetry.

Second, \textbf{a uniform threshold ignores how divergence accumulates
along the prefix.} Because per-token deviations accumulate within the
historical context $s_t=(x,y_{<t})$, a sequence of locally bounded steps
can still shift the sampling prefix far from the rollout policy
distribution. A uniform token-level
threshold provides a constant divergence budget regardless of prior prefix
drift.

These two limitations motivate \CPPO{} (Section~\ref{sec:method}). \CPPO{}
retains the standard token-level divergence but revises how the permitted
divergence budget is distributed along the response via two novel mechanisms.
To match the divergence
allowed at a position to how far that divergence can propagate, \CPPO{}
replaces the uniform threshold with a \textbf{position-weighted threshold}
that is tighter at early positions and looser at late ones. To bound accumulated divergence, \CPPO{} introduces a \textbf{cumulative prefix
budget}, which caps the weighted average of token-level divergences over every
response prefix. Unlike a per-token threshold, this cap binds
on the prefix as a whole, so once earlier tokens have driven the prefix
average up, the divergence allowed at the next token is reduced
accordingly. The trust region therefore tightens dynamically as the prefix
drifts. The next section turns these two mechanisms into a concrete
token-level mask.

\section{\CPPO{}}
\label{sec:method}

Section~\ref{sec:uniform-limits} motivated two mechanisms for distributing
the permitted divergence along a response: a position-weighted token-level
threshold and a cumulative prefix budget. Before constructing the mask, we
make precise the finite-horizon bound that these mechanisms are designed to
improve. Section~\ref{sec:residual-bound} quantifies how a token-level
divergence propagates through the remaining response, and
Section~\ref{sec:thm} states the resulting policy-improvement bound.
Sections~\ref{sec:feasible} and~\ref{sec:mask} then turn the bound into the
\CPPO{} token-level mask. Throughout, the policy loss keeps the usual
token-level \PPO{}-style ratio-advantage form with \GRPO{} group-relative
advantages in our experiments; the only change is the masking decision for
update terms that move the sampled-token ratio farther from one.

\subsection{Autoregressive Asymmetry in Error Propagation}
\label{sec:residual-bound}

We first quantify the autoregressive asymmetry: how the position of a token-level divergence dictates its downstream impact on the
surrogate residual $\Delta(\mu,\pi)$ from Section~\ref{sec:identity}. Let
$D_t:=\DTV(\mu(\cdot\mid s_t),\pi(\cdot\mid s_t))$ and
$u_t:=\E_{s_t\sim d_t^\mu}[D_t]$ be the expected token-level divergence at
position $t$. 
Suppose a position-specific divergence limit $D_t\le\ell_t$, is enforced.
A maximal-coupling argument on the suffix likelihood ratio
(Lemma~\ref{lem:suffix-tv}) gives a bound on the surrogate residual
(Proposition~\ref{prop:residual-bound}, proved in
Appendix~\ref{app:fulltheory}):
\begin{equation}
|\Delta(\mu,\pi)|
\;\le\;
4\xi\sum_{t=1}^{T-1}u_t\sum_{j=t+1}^T\ell_j
\;\le\;
\sum_{t=1}^{T-1}\lambda_tu_t,
\qquad
\lambda_t:=4\xi\bar\ell\,(T-t),\quad
\bar\ell:=\max_j\ell_j ,
\label{eq:residual-bound}
\end{equation}
where $\xi$ is the absolute bound on the reward ($|R(x,y)| \le \xi$). Equation~\eqref{eq:residual-bound} makes the first limitation from
Section~\ref{sec:uniform-limits} quantitative. The coefficient
$\lambda_t=4\xi\bar\ell(T-t)$ attached to the expected token-level
divergence $u_t$ grows linearly with the remaining horizon $T-t$.
In sequential decision making, this linear dependence formalizes the
error propagation and compounding covariate shift inherent in autoregressive generation. An
early policy deviation shifts the distribution of generated prefixes,
skewing the conditioning context for all subsequent tokens and
accumulating its surrogate residual error over the entire suffix. 
This reveals a fundamental asymmetry: early tokens act as critical branching points with long-term consequences, whereas late tokens have minimal downstream impact. 
A uniform token-level threshold $D_t\le\delta$ ignores this profile entirely.
Consequently, it under-penalizes early deviations (which carry the
largest error propagation multipliers) while overly restricting late-stage
exploration, where divergence has minimal effect on future conditioning.

This autoregressive asymmetry, however, presents a structural opportunity: Because the error propagation multiplier $\lambda_t$ scales linearly with the remaining horizon $T-t$, we introduce a monotonic position weight $w_t$ designed to mirror this exact trajectory. By aligning the position weight with the intrinsic error propagation profile, the divergence constraint naturally relaxes as generation progresses. This connection between the surrogate bound and the position weight is formalized below.

\subsection{Prefix-constrained improvement bound}
\label{sec:thm}

\CPPO{} constrains token-level divergence through weighted prefix
averages. Let $w_t>0$ be a position weight, $c_t>0$ a weighted
token-level divergence threshold, and $\Db>0$ a prefix-average threshold.
Define
\[
P_m:=\sum_{j=1}^m w_jD_j,\qquad
W_m:=\sum_{j=1}^m w_j .
\]
The theorem uses the following token-level and prefix constraints along
the rollout:
\begin{equation}
w_tD_t\le c_t\quad(t=1,\dots,T),
\qquad
P_m\le\Db W_m\quad(m=1,\dots,T-1).
\label{eq:prefix-feasible-main}
\end{equation}
The theorem is stated in weighted form because the implementation
constrains $w_tD_t$ directly. For a fixed threshold on $w_tD_t$, positions
with larger $w_t$ allow a smaller value of $D_t$, while the prefix
constraint limits the weighted average policy deviation on every prefix. Set
$\ell_t:=c_t/w_t$ and $\bar\ell:=\max_t\ell_t$.

\begin{theorem}[\CPPO{} policy-improvement bound]
\label{thm:pact-main}
Suppose~\eqref{eq:prefix-feasible-main} holds $\mu$-a.s.\ along
the rollout. Let $\lambda_t=4\xi\bar\ell(T-t)$ and assume the ratio of the error propagation penalty to the position weight, $r_t:=\lambda_t/w_t$, is non-increasing in $t=1,\dots,T-1$. Then
\begin{equation}
J(\pi)-J(\mu)\;\ge\;L'_\mu(\pi)
-2\xi T(T-1)\,\bar\ell\,\Db .
\label{eq:pact-main}
\end{equation}
In particular, for a constant threshold $c_t\equiv\delta$, and a weight floor $\wmin$ where $w_t\in[\wmin,1]$, then
\begin{equation}
J(\pi)-J(\mu)\;\ge\;L'_\mu(\pi)
-2\xi T(T-1)\,\delta\,\frac{\Db}{\wmin}.
\label{eq:pact-main-impl}
\end{equation}
\end{theorem}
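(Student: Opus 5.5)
The plan is to start from the exact identity~\eqref{eq:exact-id}, $J(\pi)-J(\mu)=L'_\mu(\pi)-\Delta(\mu,\pi)$. This reduces the claim to showing $|\Delta(\mu,\pi)|\le 2\xi T(T-1)\bar\ell\,\Db$.

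\textbf{Step 1 (residual bound).} The token-level constraint $w_tD_t\le c_t$ in~\eqref{eq:prefix-feasible-main} is exactly $D_t\le \ell_t=c_t/w_t$, holding $\mu$-a.s. So Proposition~\ref{prop:residual-bound} (via the maximal-coupling Lemma~\ref{lem:suffix-tv}) applies and gives
\[
|\Delta(\mu,\pi)|\le\sum_{t=1}^{T-1}\lambda_t u_t,\qquad u_t=\E_{s_t\sim d_t^\mu}[D_t],\quad \lambda_t=4\xi\bar\ell(T-t).
\]

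\textbf{Step 2 (prefix budget in expectation).} Taking $\mu$-expectations of the a.s.\ prefix constraint $P_m\le\Db W_m$ gives, since the $w_j$ are deterministic,
\[
A_m:=\sum_{j=1}^m w_j u_j\le \Db W_m\quad\text{for } m=1,\dots,T-1.
\]
Here I must check that every $D_j$ for $j\le m$ is evaluated along the same rollout. Then $\E_\mu[P_m]=\sum_{j\le m}w_j\E_{d_j^\mu}[D_j]$ by the tower property, so $\E_\mu[P_m]=A_m$.

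\textbf{Step 3 (Abel summation; the main step).} Write $\lambda_t u_t=r_t\,(w_tu_t)$ with $r_t=\lambda_t/w_t$, and sum by parts:
\[
\sum_{t=1}^{T-1} r_t w_t u_t=\sum_{m=1}^{T-2}(r_m-r_{m+1})A_m + r_{T-1}A_{T-1}.
\]
The hypothesis that $r_t$ is non-increasing makes every coefficient $r_m-r_{m+1}$ nonnegative. Also $r_{T-1}>0$. Hence each $A_m$ can be replaced by its upper bound $\Db W_m$. Reversing the summation by parts then gives
\[
\Db\sum_{t=1}^{T-1} r_t w_t=\Db\sum_{t=1}^{T-1}\lambda_t=4\xi\bar\ell\,\Db\sum_{t=1}^{T-1}(T-t)=2\xi T(T-1)\bar\ell\,\Db.
\]
Substituting into the identity yields~\eqref{eq:pact-main}.

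I expect this monotonicity/summation-by-parts step to be the crux. It is the only place where the cumulative prefix budget interacts with the position weights. Without a nonnegative sign on $r_m-r_{m+1}$, the prefix bounds could not be applied termwise.

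\textbf{Step 4 (specialization).} For $c_t\equiv\delta$ and $w_t\in[\wmin,1]$, we have $\ell_t=\delta/w_t\le\delta/\wmin$, so $\bar\ell\le\delta/\wmin$. Plugging this into~\eqref{eq:pact-main} gives~\eqref{eq:pact-main-impl}.

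Note that the assumption on $r_t$ uses the same $\bar\ell$ throughout. Bounding $\bar\ell$ only after the Abel step therefore causes no circularity.
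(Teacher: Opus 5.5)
Your proposal is correct and follows the paper's proof step for step: the residual bound from Proposition~\ref{prop:residual-bound}, expectations of the prefix constraints, Abel summation using the non-increasing $r_t$, and then $\bar\ell\le\delta/\wmin$ for the specialization. The only difference is cosmetic: the paper applies Abel summation to the centered slacks $\sum_{j\le m}w_j(u_j-\Db)\le 0$ and shows that $\sum_t\lambda_t(u_t-\Db)\le 0$, whereas you sum by parts on the uncentered $A_m$, substitute $\Db W_m$, and reverse the summation, which is the same computation.
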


The proof is deferred to Appendix~\ref{app:fulltheory}; it combines the
performance difference identity (Lemma~\ref{lem:exact-id}) with the
remaining-horizon residual bound (Proposition~\ref{prop:residual-bound})
and an Abel-summation step over the prefix constraints.

To facilitate comparison, we separate the current-token divergence term
from the suffix-divergence terms in Equation~\eqref{eq:residual-bound}. The
coefficient $\lambda_t=4\xi\bar\ell(T-t)$ contains the suffix
threshold $\bar\ell$, while $u_t=\E[D_t]$ is the expected token-level
divergence at position $t$:
\[
|\Delta|\;\le\;\sum_{t<T}\lambda_tu_t,
\qquad
\lambda_t=4\xi\bar\ell(T-t).
\]
A position-independent token-level divergence method with $D_t\le\delta$
controls both factors pointwise,
\[
\bar\ell\le\delta,\qquad u_t\le\delta,
\]
and therefore gives
\[
|\Delta|
\le 4\xi\delta^2\sum_{t=1}^{T-1}(T-t)
=2\xi T(T-1)\delta^2 .
\]
\CPPO{} keeps the same suffix-divergence factor $\bar\ell$. Its prefix
constraints change how the expected token-level divergences are bounded:
instead of using the pointwise implication $u_t\le\delta$, Abel
summation gives
\[
\sum_{t<T}\lambda_tu_t
\le
\Db\sum_{t<T}\lambda_t
=2\xi T(T-1)\bar\ell\Db .
\]
Thus the prefix constraints replace the pointwise token-level divergence factor
$\delta$ by the prefix-average threshold $\Db$.

\noindent\textbf{Uniform token-level threshold.} For the clean comparison, both
methods use a uniform token-level threshold $D_t\le\delta$
(Corollary~\ref{cor:uniform}).
This corresponds to setting $c_t=w_t\delta$ for \CPPO{}, so
$\bar\ell=\delta$. The two residual constants are then
\[
C_{\mathrm{uniform}}
=2\xi T(T-1)\delta^2,
\qquad
C_{\mathrm{\CPPOSHORT}}
=2\xi T(T-1)\delta\Db,
\]
and hence
\[
\frac{C_{\mathrm{\CPPOSHORT{}}}}{C_{\mathrm{uniform}}}
=\frac{\Db}{\delta}.
\]
This comparison fixes the threshold value $\delta$ across the two methods.
The improvement comes from the prefix constraints, which prevent
many early prefixes with large remaining-horizon coefficients from
all reaching the token-level threshold. The bound improves when
$\Db<\delta$.

Replacing the purely pointwise bound with the weighted prefix-average
threshold $\Db$ directly mitigates worst-case error accumulation. A
pointwise constraint permits the policy to saturate the divergence
budget at every consecutive step. Because the state in autoregressive
generation is the historical prefix $s_t=(x,y_{<t})$, such worst-case
accumulation drives the state visitation under $\pi$ far from that
under $\mu$, which directly inflates the surrogate residual
$|\Delta(\mu,\pi)|$. Abel summation refactors the residual sum in terms
of the prefix sums $P_m$. Consequently, constraining weighted prefix
averages rather than pointwise divergences tightens the bound by
precluding the worst-case accumulation patterns permitted by pointwise
constraints. The factor $\Db/\delta$ above
quantifies how much of that pointwise looseness the prefix budget
closes.

\noindent\textbf{Position-dependent token-level threshold.} The implementation in
Section~\ref{sec:feasible} constrains the weighted divergence by
$w_tD_t\le\delta$, which is equivalent to a position-dependent token-level
threshold $D_t\le\delta/w_t$: the threshold equals $\delta$ at the
beginning of the response and is relaxed toward the end
(Corollary~\ref{cor:impl}). Since $w_t\ge\wmin$,
\[
\bar\ell=\max_t\frac{\delta}{w_t}
\le\frac{\delta}{\wmin}.
\]
Combining this with the prefix part of the theorem gives
\[
C_{\mathrm{\CPPOSHORT{}}}
\le
2\xi T(T-1)\frac{\delta}{\wmin}\Db,
\qquad
\frac{C_{\mathrm{\CPPOSHORT{}}}}{C_{\mathrm{uniform}}}
\le
\frac{\Db}{\delta\wmin}.
\]
The factor $1/\wmin$ is the cost of relaxing late-token thresholds.
Under this implementation parameterization, the bound is tighter
than the position-independent threshold $D_t\le\delta$ when
$\Db<\delta\,\wmin$.

\noindent\textbf{From theoretical bounds to trust-region mechanisms.}
Theorem~\ref{thm:pact-main} establishes that tightening the surrogate
residual requires transitioning from a loose, position-agnostic
dependence on $\delta$ to a stricter bound governed by the weighted
prefix-average threshold $\Db$. This transition requires
two structural modifications to the policy update. First, the
remaining-horizon coefficient ($\lambda_t\propto T-t$) formally
captures autoregressive asymmetry. Early policy deviations propagate
over longer suffixes and therefore demand tighter regulation. We
address this by introducing a monotonically decreasing position weight
$w_t$, which enforces stricter limits initially and naturally relaxes
them as generation progresses. Second, the analytical reliance on
$\Db$ demonstrates that policy drift must be constrained cumulatively
across the conditioning history rather than evaluating next-token
divergences in isolation. We implement this via a
dynamic cumulative prefix budget. The following subsections translate
these theoretical properties into the concrete token-level masking
rule implemented by \CPPO{}.

\subsection{Prefix constraints}
\label{sec:feasible}

Theorem~\ref{thm:pact-main} requires the cumulative budget
$P_m\le\Db W_m$ to hold at every intermediate prefix
($m=1,\ldots,T-1$), not only at the full response. Enforcing this
average bound exclusively at the sequence terminal (e.g., \TRM{}-Avg)
creates a loose constraint. It permits excessive deviations at early
positions, provided later tokens mathematically offset them to satisfy
the overall response average.
To enforce the prefix constraint at every $m$ during training, when
only the prefix produced so far is available at token $t$, we turn
the cumulative requirement into a single threshold on the weighted
token-level divergence $w_tD_t$ that depends only on the preceding
tokens.

Recall the token-level divergence $D_t$ from
Section~\ref{sec:divergence-score}. Define the weighted prefix divergence
and cumulative weight by
\[
S_t:=\sum_{j=1}^t w_jD_j,\qquad
W_t:=\sum_{j=1}^t w_j,\qquad S_0=W_0:=0 ,
\]
where $S_t$ is the running form of the prefix sum $P_t$ in
Theorem~\ref{thm:pact-main}. Given a token-level threshold scale $\delta$,
a prefix-average threshold $\Db$ (the maximum weighted average allowed over
each prefix), and positive weights $\{w_t\}_{t=1}^T$, \CPPO{} keeps an
update term that moves the sampled-token ratio farther from one
only when
\begin{equation}
w_tD_t\;\le\;
c_t^{\mathrm{\CPPOSHORT{}}}
:=\min\!\left\{
\delta,\,
\delta+\Db W_{t-1}-S_{t-1}
\right\},
\qquad t=1,\dots,T .
\label{eq:pact-feasible}
\end{equation}
This effective threshold enforces the weighted token-level condition
$w_tD_t\le\delta$ together with the cumulative prefix condition
$S_t\le\delta+\Db W_{t-1}$. The extra $\delta$ term gives the first token
the full token-level threshold prior to any prefix accumulation.
At $t=1$, $W_0=S_0=0$ and the effective
prefix threshold is $\delta$. For later tokens, the effective
prefix threshold is above the token-level threshold when the preceding
weighted average $S_{t-1}/W_{t-1}$ is below $\Db$, so the token-level
threshold remains active. Once preceding deviations make this prefix
average exceed $\Db$, the prefix-adjusted term
$\delta+\Db W_{t-1}-S_{t-1}$ falls below $\delta$ and reduces the
allowed weighted divergence at the current token. The effective
threshold therefore tightens dynamically as prefix divergence accumulates, instead
of granting every token the same allowance.

The implementation uses the decreasing linear schedule
\begin{equation}
w_t=1-\frac{1-\wmin}{T-1}(t-1),\qquad t=1,\dots,T,\quad
w_t\in[\wmin,1].
\label{eq:linear-w}
\end{equation}
Because the constraint is applied to $w_tD_t$, the allowed token-level
divergence at position $t$ is $D_t\le\delta/w_t$. This threshold starts at
$\delta$ at the beginning of the response and relaxes to
$\delta/\wmin$ near the end. The weights parameterize the token-level
and prefix constraints rather than defining a new divergence measure or
loss term. Early positions therefore allow smaller values of $D_t$,
while later positions are relaxed where the remaining suffix is shorter.
The schedule satisfies the monotonicity condition required by
Theorem~\ref{thm:pact-main}, namely that $(T-t)/w_t$ is
non-increasing in $t$ (Proposition~\ref{prop:linear-mono} in
Appendix~\ref{app:fulltheory}). The scalar
$\delta$ controls the token-level threshold scale, and $\Db$
controls the weighted average allowed at every prefix.

Figure~\ref{fig:position-diagnostic} places this schedule next to
an offline diagnostic from
Qwen3-30B-A3B rollouts after 200 \GRPO{} updates. For each token
position, we compute
$|\pi_\theta(a_t\mid s_t)-\mu(a_t\mid s_t)|$, where
$\pi_\theta$ is the post-update training policy and $\mu$ is the
rollout policy. The mean and top-$10\%$ tail views show that
policy deviation varies with token position. The right panel shows
the induced token-level threshold $\delta/w_t$
used by \CPPO{}.

\begin{figure}[!ht]
\tightfigtop
\centering
\panelgraphics{0.32}{a}{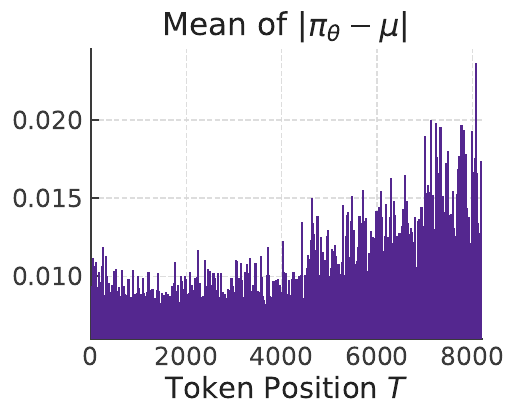}\hfill
\panelgraphics{0.32}{b}{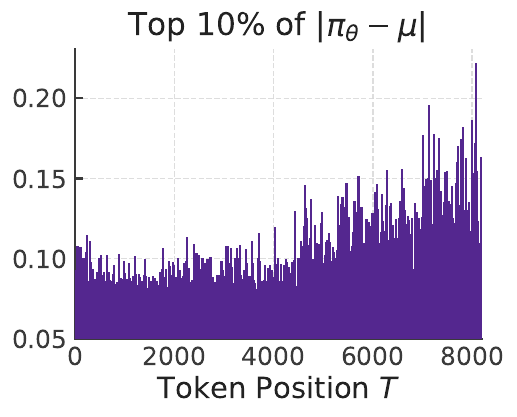}\hfill
\panelgraphics{0.32}{c}{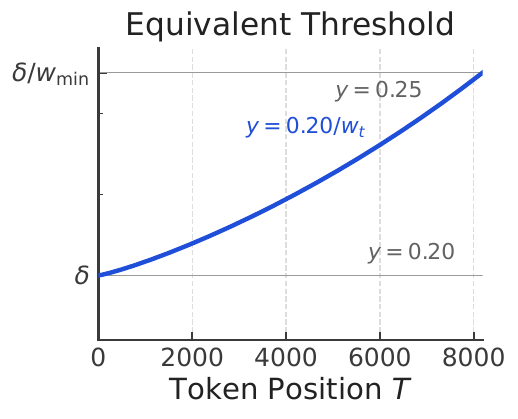}
\caption{Position-conditioned policy deviation and the
position-dependent token-level threshold.
\textbf{Left}: maximum of bin-level mean absolute probability
deviation across token positions.
\textbf{Middle}: the corresponding top-$10\%$ tail view.
\textbf{Right}: the equivalent token-level threshold $\delta/w_t$
induced by the decreasing $w_t$ schedule.}
\label{fig:position-diagnostic}
\end{figure}

Figure~\ref{fig:prefix-constraint} illustrates
Equation~\eqref{eq:pact-feasible} on a schematic token window. The
orange curve is the prefix-adjusted threshold
$\delta+\Db W_{t-1}-S_{t-1}$, and the green curve is the effective
threshold after taking the minimum with the token-level threshold
$\delta$. The blue shaded region marks tokens for which the token-level
threshold is active; the orange shaded region marks tokens for which the
prefix constraint is active. Because the implementation includes the
initial prefix slack in Equation~\eqref{eq:pact-feasible}, the effective
threshold starts at $\delta$. Low initial weighted deviations keep the
token-level threshold active, while larger accumulated deviations later
reduce the effective threshold below $\delta$. A token can then satisfy
the token-level threshold but exceed the effective threshold, in which
case it is masked by the prefix constraint.

The initial slack is an implementation detail rather than part of
the formal theoretical bound. It changes the prefix inequality from
$S_t\le\Db W_t$ to $S_t\le\delta+\Db W_{t-1}\le\Db W_t+\delta$,
adding only a lower-order $O(T\bar\ell\delta)$ term to the Abel
bound in Theorem~\ref{thm:pact-main};
Proposition~\ref{prop:initial-prefix-slack} in
Appendix~\ref{app:fulltheory} gives the calculation.

\begin{figure}[!ht]
\tightfigtop
\centering
\includegraphics[width=\linewidth]{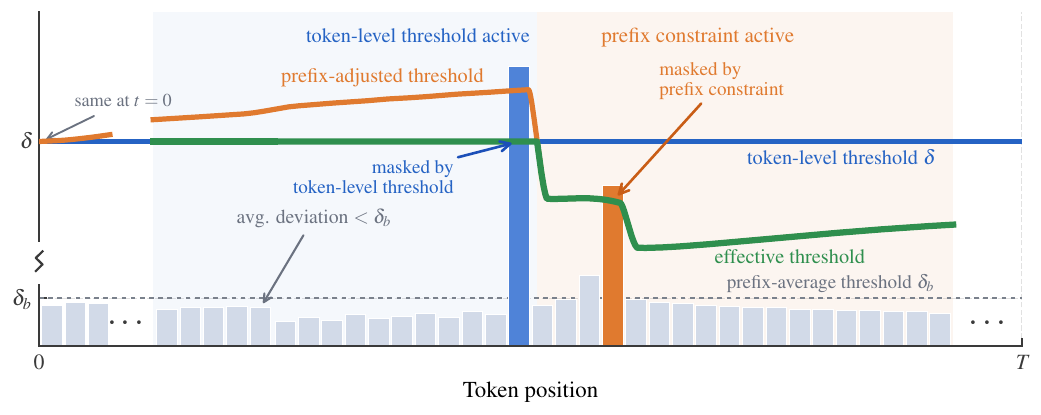}
\caption{Cumulative prefix constraint on a token window. Grey bars
are simulated weighted deviations $w_tD_t$. The blue shaded region
denotes tokens where the token-level threshold $\delta$ is active;
the orange shaded region denotes tokens where the prefix constraint
determines the effective threshold. The blue bar is
masked by the token-level threshold; the orange bar satisfies the
token-level threshold but violates the effective threshold.}
\label{fig:prefix-constraint}
\end{figure}

\subsection{Token-level mask and surrogate objective}
\label{sec:mask}

The feasibility test of Equation~\eqref{eq:pact-feasible} collects the
position-dependent token-level threshold and the cumulative prefix
constraint into a single per-token condition $I_t$. An update term is kept
whenever it moves the sampled-token ratio toward one, and otherwise only
when $I_t$ holds:
\begin{equation}
M_t^{\mathrm{\CPPOSHORT{}}}
=\ind\!\big[
\hat A_t(\rho_t-1)\le 0
\;\;\vee\;\;
I_t
\big],
\qquad
I_t:\ w_tD_t\le\delta
\;\;\wedge\;\;
S_t\le\delta+\Db W_{t-1}.
\label{eq:hard-mask}
\end{equation}
The first clause preserves update terms that move $\pi$ closer to $\mu$.
The indicator $I_t$ exclusively restricts terms that drive $\pi$ further from
$\mu$, simultaneously enforcing the token-level and prefix constraints. Substituting
this mask into the token-level ratio-advantage objective gives the \CPPO{}
surrogate
\begin{equation}
\mathcal L_\mu^{\mathrm{\CPPOSHORT{}}}(\pi)
=\E_\mu\!\left[
\sum_{t=1}^T
M_t^{\mathrm{\CPPOSHORT{}}}\,\rho_t\,\hat A_t
\right],
\label{eq:pact-loss}
\end{equation}
with the trust-region decision carried entirely by
$M_t^{\mathrm{\CPPOSHORT{}}}$ (we use \GRPO{} group-relative advantages for
$\hat A_t$ in our experiments). When the prefix has not yet drifted, $I_t$
reduces to the position-dependent token-level threshold; once accumulated
deviation drives the prefix average toward $\Db$, the prefix constraint
becomes the binding term and tightens the mask.

Algorithm~\ref{alg:pact} writes the mask for one sampled response
in the same order in which the tokens appear. The running sums
$S_t$ and $W_t$ are ordinary prefix sums: after token $t$, they
store the total weighted divergence and total weight up to that
token. The threshold at token $t$ uses $S_{t-1}$ and $W_{t-1}$,
so the current decision depends only on the preceding prefix and
the current weighted divergence $Z_t=w_tD_t$. A batched tensor
implementation computes the same quantities with a cumulative sum
over the response dimension and a one-token right shift; the loop
below is written for readability rather than for kernel efficiency.
The soft-gate variant is deferred to Appendix~\ref{app:soft}.

\begin{algorithm}[!htbp]
\caption{Token-level \CPPO{} mask for one response}
\label{alg:pact}
\begin{algorithmic}[1]
\REQUIRE \small ratios $\rho_{1:T}$, advantages $\hat A_{1:T}$,
token-level divergences $D_{1:T}$, thresholds $\delta,\Db$, weight floor
$\wmin$. \normalsize
\ENSURE token mask $M_{1:T}$.
\STATE Initialize $S_0\leftarrow0$ and $W_0\leftarrow0$.
\FOR{$t=1,\dots,T$}
\STATE Set
$w_t\leftarrow1-\frac{1-\wmin}{T-1}(t-1)$ and
$Z_t\leftarrow w_tD_t$.
\STATE Compute the effective threshold
$c_t\leftarrow\min\{\delta,\delta+\Db W_{t-1}-S_{t-1}\}$.
\IF{$\hat A_t(\rho_t-1)\le0$ or $Z_t\le c_t$}
\STATE $M_t\leftarrow1$.
\ELSE
\STATE $M_t\leftarrow0$.
\ENDIF
\STATE Update prefix sums
$S_t\leftarrow S_{t-1}+Z_t$ and $W_t\leftarrow W_{t-1}+w_t$.
\ENDFOR
\RETURN $M_{1:T}$.
\end{algorithmic}
\end{algorithm}

\FloatBarrier

\section{Experiments}
\label{sec:experiments}

The experiments test whether prefix-budgeted masking improves
reasoning RL under matched data, rollout lengths, and validation
selection windows. The baselines include ratio-based masks,
prefix-ratio objectives, divergence-based token
masks, and sequence-level trust-region masks. We first report
results across four Qwen3 settings, then use ablations to test
which \CPPO{} constraints drive the gain.

\subsection{Setup}
\label{sec:exp-setup}

All runs train on \textbf{DAPO-Math-17k}~\citep{yu2026dapo},
a set of roughly 17k verifiable mathematical reasoning prompts,
using the \texttt{verl}~\citep{10.1145/3689031.3696075} GRPO/DAPO
rollout stack. We evaluate four Qwen3~\citep{DBLP:journals/corr/abs-2505-09388}
model settings:
\textbf{Qwen3-1.7B-Base}, \textbf{Qwen3-1.7B}
(post-trained), \textbf{Qwen3-8B-Base}, and
\textbf{Qwen3-30B-A3B-Base}. The two 1.7B settings and the 8B-Base run
use $T_{\max}=8\mathrm{k}$ with $n=8$ rollouts. The
30B-A3B-Base run uses $16\mathrm{k}$ with $n=16$ rollouts.
Validation uses $\mathrm{Avg}@16$ on AIME24, AIME25, and
AIME26, and the main score is their
unweighted mean,
\textbf{AIME24/25/26 $\mathrm{Avg}@16$}. For each method, we
report the best validation score within the matched evaluation
horizon $[0,T^{\mathrm{stop}}]$ for that model, so no method is
selected after receiving additional training budget. Full
training details, per-model update budgets, and per-benchmark
breakdowns are in Appendix~\ref{app:exp-details}.

The baselines are grouped by the trust-region rule they use.
\GRPO{}~\citep{DBLP:journals/corr/abs-2402-03300} and
\CISPO{}~\citep{DBLP:journals/corr/abs-2506-13585} operate on sampled-token
ratios, with \CISPO{} using asymmetric clip thresholds.
\MinPro{}~\citep{lei2026step} adds a prefix-ratio surrogate
without a cumulative distributional budget.
\TRM{}-Max and \TRM{}-Avg~\citep{li2025trust} summarize each completed response with
a sequence-level KL criterion. The \GRPO{} row uses the Clip-Higher setting
$(\epsilon_{\mathrm{low}},\epsilon_{\mathrm{high}})=(0.2,0.28)$.
The other baseline hyperparameters follow their original papers.
The closest baseline is
\DPPO{}~\citep{qi2026rethinking}. \DPPO{} and \CPPO{} use a matched
Top-$K$ ($K{=}20$) reduced-TV score and per-model token-level
threshold scale. \DPPO{} applies this scale as a uniform token-level
threshold, while \CPPO{} uses it inside the weighted and prefix
constraints.

\CPPO{} adds two hyperparameters on top of this shared threshold scale,
both defined in Section~\ref{sec:method}. The token-level threshold scale
$\delta$ is the per-token divergence allowed at the start of a response,
exactly as in \DPPO{}. The prefix-average threshold $\Db$ is the largest
weighted average of the token-level divergences permitted over any prefix
of the response (the $S_t\le\Db W_t$ part of
Equation~\eqref{eq:pact-feasible}); it controls how much accumulated
deviation a prefix may carry, not a per-token quantity. The weight floor
$\wmin$ sets the smallest position weight, so the linear schedule
$w_t\in[\wmin,1]$ runs from a full-strength constraint on the first token
to its loosest setting on the last; a smaller $\wmin$ relaxes late tokens
more, matching the shorter remaining horizon there. We set the threshold
scale to $\delta=0.15$ for the dense models and $\delta=0.2$ for the
30B-A3B MoE model. \CPPO{} uses $\wmin=0.8$ throughout. For the post-trained reasoning
model, the average token-level divergence remains stable, so we use a
fixed $\Db=0.015$. In contrast, during the initial exploration phase
of Base-model training, the average token-level divergence is
exceptionally large before rapidly decaying as the policy stabilizes.
To avoid excessively clipping these exploratory tokens before
training stabilizes, we set $\Db$ adaptively based on each
sequence's divergence statistics on the Base models. Specifically,
for each generated sequence, we compute the top-$10\%$ quantile (the
90th percentile) of its token-level divergences as the raw $\Db$, and
then clamp this value between a minimum threshold $\Db^{\min}$
(listed in Table~\ref{tab:exp-config}) and an upper bound of
$2\Db^{\min}$.
Appendix Figure~\ref{fig:base-deltab-diagnostics} reports the realized effective $\Db$ values and the fraction of masked tokens rejected by the prefix-budget condition.

\FloatBarrier

\subsection{Main results}
\label{sec:exp-main}

Table~\ref{tab:main} reports the best validation
AIME24/25/26 $\mathrm{Avg}@16$ reached within the same
$[0,T^{\mathrm{stop}}]$ window for each model setting. This
selection rule prevents a method from benefiting from a longer
training budget on the same model.

\begin{table}[!htbp]
\centering
\caption{\textbf{Best validation AIME24/25/26 $\mathrm{Avg}@16$
across the four models} (\%, higher is better). For each
method we report the highest AIME24/25/26 $\mathrm{Avg}@16$
within the matched training-step window
$[0,T^{\mathrm{stop}}]$. The best score in each column is in
\textbf{bold} and the second-best is \underline{underlined}.
Per-benchmark breakdowns and
$T^{\mathrm{stop}}$ are in Table~\ref{tab:main-full} of
Appendix~\ref{app:exp-details}.}
\label{tab:main}
\small
\resulttablespacing
\begin{tabularx}{\textwidth}{l *{4}{C}}
\toprule
\textbf{Method} & \textbf{1.7B} & \textbf{1.7B-Base} & \textbf{8B-Base} & \textbf{30B-A3B-Base}\\
\midrule
\GRPO{}         & 27.91             & \phantom{0}8.89             & 23.96             & 38.19             \\
\MinPro{}       & 27.71             & 11.04                       & \tsecond{29.72}   & 48.12             \\
\CISPO{}        & \tsecond{28.82}   & \tsecond{11.87}             & 29.58             & \textit{collapse}  \\
\DPPO{}         & 28.19             & 10.90                       & 28.89             & \tsecond{49.23}   \\
\TRM{}-Max      & 25.21             & \phantom{0}9.72             & 26.73             & 20.27             \\
\TRM{}-Avg      & 26.87             & 11.70                       & 27.98             & 48.96             \\
\midrule
\rowcolor{ourshl}
\CPPO{} (ours)  & \tbest{31.88}     & \tbest{12.78}               & \tbest{31.11}     & \tbest{54.79}     \\
\bottomrule
\end{tabularx}
\end{table}

\CPPO{} consistently outperform all the baselines across all settings by a significant margin. 
Specifically, \CPPO{} attains the best AIME24/25/26 $\mathrm{Avg}@16$ in all settings, reaching 31.88, 12.78, 31.11, and 54.79 on Qwen3-1.7B,
Qwen3-1.7B-Base, Qwen3-8B-Base, and Qwen3-30B-A3B-Base, respectively. The margins over
the second-best method are 3.06, 0.91, 1.39, and 5.56 absolute points.
The performance of the baselines varies across models. \CISPO{} attains the
second highest validation performance on the 1.7B models, while \MinPro{} and
\DPPO{} rank second on 8B-Base and 30B-A3B-Base respectively. 

The comparison with \DPPO{} is strictly controlled, as the
two methods share the same Top-$K$ reduced-TV score and the same
per-model threshold scale $\delta$ and differ only in the weighted and
prefix constraints. Under this matched setup \CPPO{} improves on
\DPPO{} by 3.69, 1.88, 2.22, and 5.56 points across the four models.
The gain is thus attributable to how the token-level divergence is
allocated along the response, and not to a different divergence measure
or a looser threshold scale.

The largest improvement, 5.56 points, occurs on Qwen3-30B-A3B-Base, the
largest model and the only run with a $16\mathrm{k}$ rollout horizon.
This is the regime where the remaining-horizon amplification of
Section~\ref{sec:residual-bound} is most pronounced, since an early-token
deviation there propagates through a longer suffix. The same setting
separates stable from unstable trust-region rules: \CISPO{} collapses
partway through training and is omitted from this column, and the
sequence-level \TRM{}-Max degrades to 20.27, whereas \CPPO{} trains
stably to the selected checkpoint. The per-benchmark AIME24/25/26
components at the selected checkpoints are reported in
Table~\ref{tab:main-full}.

Figure~\ref{fig:main-curves} shows the validation
AIME24/25/26 $\mathrm{Avg}@16$ trajectories for the three Base-model
runs. \CPPO{} consistently maintains a performance advantage over the
baselines throughout training, confirming that the prefix-constrained trust
region yields stable and sustained performance gains. The
separation from \DPPO{} widens as training proceeds, consistent with the
prefix constraint tightening only after policy deviation has accumulated
over a prefix.

\begin{figure}[!htbp]
\tightfigtop
\centering
\includegraphics[width=\linewidth]{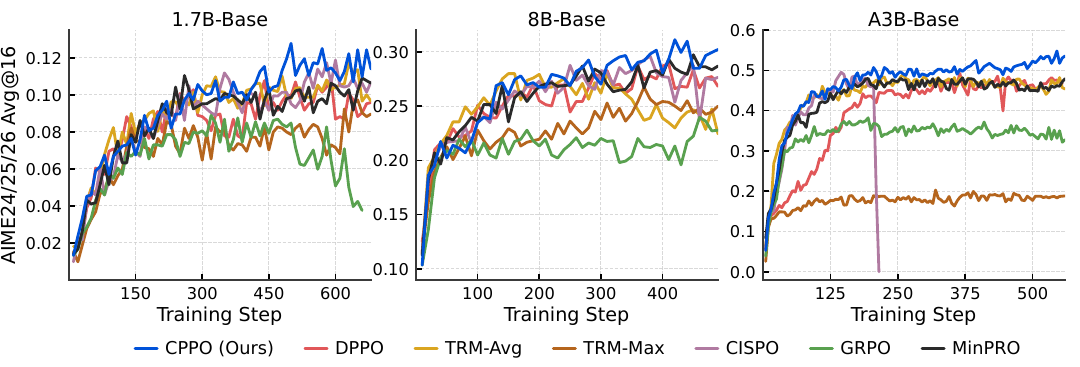}
\caption{Validation AIME24/25/26 $\mathrm{Avg}@16$ curves for
the three Base-model runs.}
\label{fig:main-curves}
\end{figure}

\subsection{Ablations}
\label{sec:exp-ablations}

We isolate the contributions of \CPPO{}'s components by incrementally
modifying the matched \DPPO{} baseline on Qwen3-1.7B and Qwen3-1.7B-Base.

\noindent\textbf{Single mechanism ablation.} The left panel of Figure~\ref{fig:ablation-main} separates the
two constraints inside \CPPO{} on Qwen3-1.7B against the matched \DPPO{}
baseline, which uses a uniform token-level threshold with no prefix
constraint. \CPPO{} w/o Position Weight adds the prefix constraint with
$w_t\equiv 1$, and \CPPO{} w/o Prefix Budget uses $w_tD_t\le\delta$ without
the prefix constraint. Both single-constraint variants outperform
\DPPO{}, while the complete \CPPO{} mask achieves the highest validation
scores. This indicates that the position weights and the cumulative
prefix budget provide independent and complementary performance gains.

\noindent\textbf{Position-weight ordering.} The middle panel evaluates whether the performance improvements of the
position-dependent threshold stem from the autoregressive order itself or
from threshold heterogeneity. The shuffled variant keeps the same multiset of
position-dependent token-level thresholds implied by $\{w_1,\dots,w_T\}$
within each rollout but randomly reassigns them to token positions.
The ordered schedule outperforms the shuffled variant, confirming that the
autoregressive position order, rather than threshold heterogeneity, drives the
performance gain.

\noindent\textbf{Mask vs. soft gate.} The right panel compares the full \CPPO{} hard mask with a
soft variant, which attenuates a token's gradient near the constraint
boundary with a non-increasing gate instead of dropping it, following
\SAPO{}~\citep{gao2025soft}; the gate construction is in
Appendix~\ref{app:soft}. The soft variant stays in the same range as the
hard mask, so we keep the hard mask as the default and treat soft gating as
an implementation choice.

\begin{figure}[!htbp]
\tightfigtop
\centering
\includegraphics[width=\linewidth]{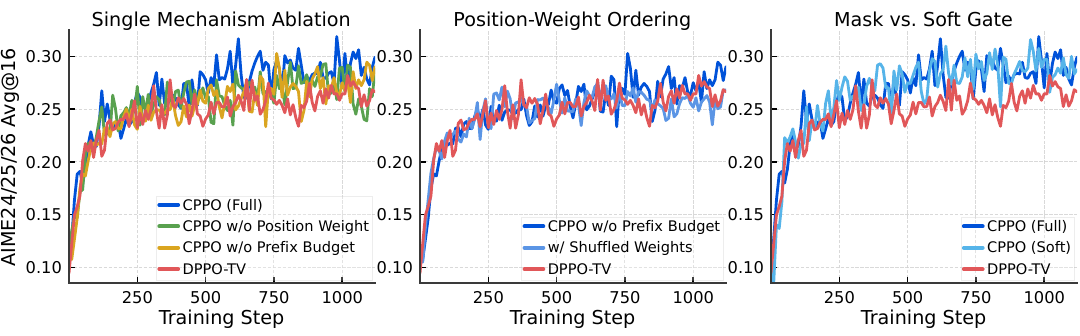}
\caption{Ablations on Qwen3-1.7B.}
\label{fig:ablation-main}
\end{figure}

\noindent\textbf{Hyperparameter sensitivity.} The left panel of Figure~\ref{fig:ablation-base} sweeps the
minimum prefix-average threshold and the weight floor around the default
$(\Db,\wmin)=(0.02,0.8)$ on Qwen3-1.7B-Base. The neighboring settings
stay close to the default and above \DPPO{}, so the gain is not tied to
a single operating point.

\noindent\textbf{KL vs. TV divergence.} The middle panel replaces the Top-$K$ reduced-TV score with a KL score
while keeping the \CPPO{} prefix constraints fixed, using the per-token
and prefix-average thresholds of \TRM{} ($\delta=0.1$ and $\Db=0.002$).
The KL configuration yields performance comparable to the TV configuration
and consistently outperforms \DPPO{}. This demonstrates that the
improvements of \CPPO{} are robust to the choice of divergence metric. As a control, the TRM Max\&Avg curve applies these same two
thresholds through \TRM{}'s sequence-level masks, which keep or drop an
entire response by its maximum and mean token-level KL rather than by a
cumulative prefix budget. This variant matches \DPPO{} and stays below
\CPPO{}, indicating that the gain comes from enforcing the thresholds as
a prefix budget rather than from the threshold values.

\noindent\textbf{Binary vs. Top-$K$ approximation.} The right panel replaces the Top-$K$ reduced-TV score with the simpler
Binary-TV partition used by \DPPO{}. Both configurations maintain similar
validation scores and consistently outperform \DPPO{}. This indicates the
performance improvement is robust to the choice of divergence metric and its
approximation granularity, aligning with the \DPPO{} ablation of
\citet{qi2026rethinking} that finds Binary and Top-$K$ estimators yield
comparable results. The prefix budget, not the
divergence estimator, drives the improvement.

\begin{figure}[!htbp]
\tightfigtop
\centering
\includegraphics[width=\linewidth]{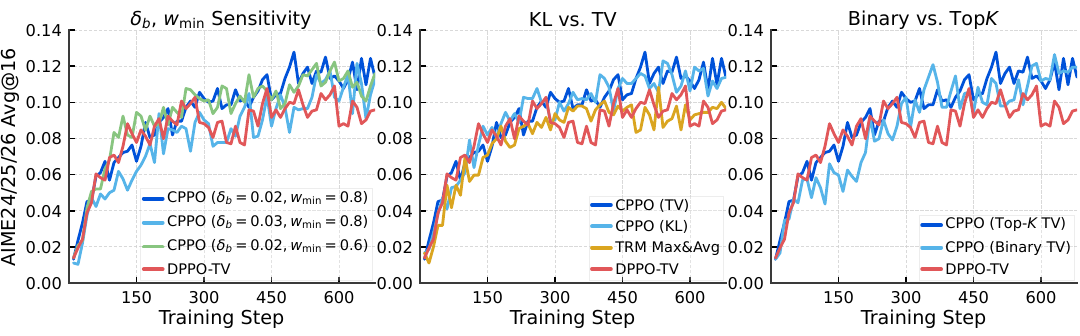}
\caption{Qwen3-1.7B-Base ablations.}
\label{fig:ablation-base}
\end{figure}
\section{Conclusion}
\label{sec:discussion}

This work revisits the token-level trust region used in reasoning RL. A
uniform token-level threshold applies an identical constraint across all
positions. This ignores the autoregressive nature of generation, where
early deviations affect longer suffixes and per-token errors accumulate
within the conditioning prefix. Starting from the finite-horizon performance difference identity,
we derive a prefix-constrained policy-improvement bound that makes both
effects explicit and turn it into \CPPO{}, a token-level mask with two
mechanisms: a decreasing position weight that makes the threshold tighter at
early positions, and a cumulative prefix budget that reduces the divergence
allowed at a later update once preceding deviations have driven the weighted
prefix average up.

Both mechanisms act through the masking decision alone, so \CPPO{} reuses
the \PPO{}/\GRPO{} ratio-advantage objective and the same per-token
divergence as \DPPO{}, introducing no additional loss terms. Across four Qwen3 settings
spanning dense and MoE models and Base and post-trained checkpoints, \CPPO{}
attains the best validation AIME24/25/26 $\mathrm{Avg}@16$, and the
ablations attribute the gain to the position weight and the prefix budget
rather than to the divergence estimator.

\section*{Acknowledgement}
\label{sec:acknowledgement}

We are grateful to all the members of the Hunyuan Multimodal Foundational RL Team for their support. We especially thank Tianyu Pang for his invaluable advice and feedback on this work.

\clearpage
\bibliography{references}

\clearpage
\appendix
\section{Related work}
\label{app:related}

\CPPO{} is a trust-region method for RLVR. We organize related work
by the statistic each method uses to constrain policy movement;
\CPPO{}'s contribution is prefix-budgeted, position-aware token
masking under a fixed token-level divergence statistic, not a new way
to measure that divergence.

\noindent\textbf{Sampled-ratio methods.}
\PPO{}-style methods~\citep{schulman2017proximal} restrict updates
through the sampled-token importance ratio $\rho_t$.
\GRPO{}~\citep{DBLP:journals/corr/abs-2402-03300} adapts this template
to LLMs with group-relative advantages from a verifier;
Dr.GRPO~\citep{liu2025understanding} and
REINFORCE++~\citep{hu2025reinforce++} change the advantage
normalization or the policy-gradient estimator while keeping the
sampled-ratio update.
DAPO~\citep{yu2026dapo} relaxes the upper clip range to preserve
exploratory high-ratio updates;
\CISPO{}~\citep{DBLP:journals/corr/abs-2506-13585} clips the
importance-sampling weight rather than the token update itself,
preserving gradients from low-probability tokens that token-level
clipping would otherwise suppress.
GSPO~\citep{DBLP:journals/corr/abs-2507-18071} extends the importance
ratio from tokens to sequences with sequence-level clipping;
GMPO~\citep{zhao2026geometricmean} replaces the arithmetic mean of
token ratios in the surrogate with a geometric mean, reducing the
influence of outlier tokens.
\MinPro{}~\citep{lei2026step} substitutes a non-cumulative
minimum-prefix-ratio surrogate for the unstable cumulative prefix
ratio.
GPG~\citep{DBLP:journals/corr/abs-2504-02546} drops the surrogate and
the ratio-clip test altogether for a direct policy-gradient
estimator, outside the trust-region family.
Apart from GPG, these methods control the update through one or more
sampled importance ratios, which only provides a single-sample estimate
of the policy shift.

\noindent\textbf{Distributional-divergence trust regions.}
A second line of work replaces the sampled-ratio test with a
token-level distributional divergence $D_t$.
\DPPO{}~\citep{qi2026rethinking} uses TV or KL as $D_t$ via Binary or
Top-$K$ reduced approximations and applies a uniform threshold across
token positions.
\TRM{}~\citep{li2025trust} applies a sequence-level mask: if the
maximum token divergence (\TRM{}-Max) or the response-mean divergence
(\TRM{}-Avg) exceeds a threshold, the entire response is excluded
from the update.
Classical trust-region
methods~\citep{kakade2002approximately,schulman2015trust,achiam2017constrained,peters2010relative,DBLP:journals/corr/abs-1806-06920,DBLP:journals/corr/abs-1909-12238}
constrain or regularize KL in classical RL settings, providing the
trust-region/relative-entropy lineage rather than LLM token-level
divergence-budget methods.
Trust-region-guided PPO variants further modify the clip threshold or
rollback behavior using divergence
information~\citep{DBLP:journals/corr/abs-1901-10314,DBLP:conf/uai/WangHT19};
separate empirical analyses document how PPO/TRPO implementation
choices and clip thresholds affect
performance~\citep{Engstrom2020Implementation,andrychowicz2021what}.
These methods change how the trust region measures policy movement;
\CPPO{} addresses a separate question of where that movement is
allowed to accumulate. Given a fixed token-level divergence, \CPPO{}
weights early positions more strongly through $w_t$ and caps the
weighted prefix average through $\Db$, rather than introducing a new
measurement of token-level shift.
Their feasibility constraints map onto specializations of \CPPO{}:
\TRM{}-Max to the uniform-threshold sequence-level specialization
without a prefix budget ($w_t\equiv 1$, no $\Db$), and \TRM{}-Avg to
a full-response-average specialization of \CPPO{}'s prefix
constraints that is weaker by up to a factor $2-2/T$ at
$w_t\equiv 1$. Algorithmically, however, \TRM{} masks whole
sequences while \CPPO{} masks individual token updates.
In our experiments, \DPPO{} and \CPPO{} use a matched Top-$K$
reduced-TV statistic and per-model threshold scale, so the empirical
comparison isolates the prefix constraints from the divergence
statistic.

\noindent\textbf{Complementary directions.}
Several recent methods modify the policy objective in orthogonal
dimensions and are complementary to \CPPO{}.
NFPO~\citep{yoon2026multi} and FIPO~\citep{ma2026fipo} change the
weight attached to each token update through future-dependent
likelihood-ratio or KL signals, whereas \CPPO{} keeps the token
surrogate and changes how policy deviation may accumulate across
prefixes.
Entropy-based analyses identify high-entropy minority tokens that
act as reasoning
forks~\citep{DBLP:journals/corr/abs-2506-01939} and study how policy
entropy evolves to prevent collapse during RLVR
training~\citep{cui2025the}; in contrast, \CPPO{} addresses the distinct
issue of how the allowed deviation should be allocated across positions and
prefixes.
Soft-masking methods~\citep{gao2025soft} replace hard violation
indicators with non-increasing attenuation, and \CPPO{}'s violation
score $x_t^{\CPPOSHORT{}}$ admits the same soft variant
(Appendix~\ref{app:soft}).

\begin{table}[!ht]
\centering
\caption{Trust-region methods grouped by the statistic they use and
where the constraint is applied. Here token-level TV/KL denotes the
divergence between rollout and target next-token distributions at a
sampled prefix. In the matched comparison, \DPPO{} and \CPPO{} use a
matched Top-$K$ reduced-TV statistic; prefix-ratio objectives such as
\MinPro{} are not divergence-budget methods.}
\label{tab:method-distinction}
\small
\setlength{\tabcolsep}{5pt}
\begin{tabular}{@{}>{\raggedright\arraybackslash}p{0.16\textwidth}
                  >{\raggedright\arraybackslash}p{0.31\textwidth}
                  >{\raggedright\arraybackslash}p{0.46\textwidth}@{}}
\toprule
Method class & Statistic used by the rule & Constraint applied \\
\midrule
\PPO/\GRPO
& sampled-token ratio $\rho_t$
& clips sampled ratios in the surrogate;\newline
no distributional divergence budget \\
\DPPO{}
& token-level divergence $D_t$
& uses one divergence threshold;\newline
shared across token positions \\
\TRM-Max
& response maximum $\max_t D_t$
& constrains the largest token divergence;\newline
computed over the full response \\
\TRM-Avg
& response average $T^{-1}\sum_t D_t$
& constrains the average divergence;\newline
computed over the full response \\
\CPPO{} (ours)
& token-level divergence $D_t$\newline
with prefix sums
& token-level threshold + prefix-average constraints;\newline
position weights $w_t$ vary the effective threshold \\
\bottomrule
\end{tabular}
\end{table}

\section{Full Proofs and Theoretical Details}
\label{app:fulltheory}

This appendix provides the full proofs of the results stated
in Section~\ref{sec:theory}, together with the corollaries
that characterize \CPPO{}'s relationship to position-agnostic
and sequence-level methods, and the product-form suffix bound.

\subsection{Finite-horizon performance difference identity}

The following lemma is the exact identity stated as
Equation~\eqref{eq:exact-id} in Section~\ref{sec:identity}.

\begin{lemma}[Performance difference identity]
\label{lem:exact-id}
Under common support, let
$\rho_{a:b}:=\prod_{j=a}^b\rho_j$ and
$\rho_{T+1:T}:=1$. Then
\begin{equation}
\begin{aligned}
J(\pi)-J(\mu)
&=L'_\mu(\pi)-\Delta(\mu,\pi),\\
L'_\mu(\pi)
&:=\E_\mu\!\left[
R(x,y)\sum_{t=1}^T(\rho_t-1)
\right],\\
\Delta(\mu,\pi)
&:=\E_\mu\!\left[
R(x,y)\sum_{t=1}^T(\rho_t-1)(1-\rho_{t+1:T})
\right].
\end{aligned}
\end{equation}
Hence $J(\pi)-J(\mu)\ge L'_\mu(\pi)-|\Delta(\mu,\pi)|$.
\end{lemma}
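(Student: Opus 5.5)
The identity rests on telescoping the product of ratios. The key observation is that, under common support, $\E_\mu[R(x,y)\rho_{1:T}] = \E_\pi[R(x,y)] = J(\pi)$: the autoregressive factorization makes $\rho_{1:T}=\pi(y\mid x)/\mu(y\mid x)$ the full-sequence likelihood ratio. Hence
\[
J(\pi)-J(\mu)=\E_\mu\bigl[R(x,y)(\rho_{1:T}-1)\bigr],
\]
and it suffices to show the pathwise algebraic identity
\[
\rho_{1:T}-1=\sum_{t=1}^T(\rho_t-1)-\sum_{t=1}^T(\rho_t-1)(1-\rho_{t+1:T})
\]
for every fixed trajectory $(x,y)$. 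Multiplying this by $R(x,y)$ and taking $\E_\mu$ then gives $J(\pi)-J(\mu)=L'_\mu(\pi)-\Delta(\mu,\pi)$ by linearity. Every expectation is finite because $|R|\le\xi$ and the sums are finite; $\rho_t$ is finite $\mu$-a.s. under common support.

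To verify the pathwise identity, first simplify the right-hand side: each summand is $(\rho_t-1)-(\rho_t-1)(1-\rho_{t+1:T})=(\rho_t-1)\rho_{t+1:T}$. The right-hand side is therefore $\sum_{t=1}^T(\rho_t-1)\rho_{t+1:T}=\sum_{t=1}^T(\rho_{t:T}-\rho_{t+1:T})$. This is a telescoping sum, and it equals $\rho_{1:T}-\rho_{T+1:T}=\rho_{1:T}-1$ by the convention $\rho_{T+1:T}=1$. This is exactly the reverse-telescoping step mentioned in Section~\ref{sec:identity}.

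The only step needing care is the change of measure $\E_\mu[R\rho_{1:T}]=J(\pi)$, i.e.\ the sum over $y$ of $\mu(y\mid x)\,\pi(y\mid x)/\mu(y\mid x)\,R$. Here common support matters: sequences with $\mu(y\mid x)=0$ must also have $\pi(y\mid x)=0$, so no mass is lost. I would state this explicitly and handle variable-length responses by padding to horizon $T$ with an absorbing end token whose ratio is $1$.

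The final inequality is immediate, since $-\Delta\ge-|\Delta|$.
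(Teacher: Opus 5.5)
Your proposal is correct and follows the paper's proof: importance sampling gives $J(\pi)-J(\mu)=\E_\mu[R(x,y)(\rho_{1:T}-1)]$, the reverse telescoping identity $\rho_{1:T}-1=\sum_{t=1}^T(\rho_t-1)\rho_{t+1:T}$ is split into $L'_\mu(\pi)$ minus $\Delta(\mu,\pi)$, and the inequality follows from $-\Delta\ge-|\Delta|$. Your explicit checks of the telescoping step and of the support direction for the change of measure are details the paper leaves implicit.
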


\begin{proof}
By importance sampling,
\begin{equation*}
J(\pi)-J(\mu)
=
\E_\mu\!\left[
R(x,y)(\rho_{1:T}-1)
\right].
\end{equation*}
The reverse telescoping identity
\begin{equation*}
\rho_{1:T}-1
=
\sum_{t=1}^T(\rho_t-1)\rho_{t+1:T}
\end{equation*}
then gives
\begin{align*}
J(\pi)-J(\mu)
&=
\E_\mu\!\left[
R(x,y)\sum_{t=1}^T(\rho_t-1)\rho_{t+1:T}
\right]\\
&=
\E_\mu\!\left[
R(x,y)\sum_{t=1}^T(\rho_t-1)
\right]
-
\E_\mu\!\left[
R(x,y)\sum_{t=1}^T(\rho_t-1)(1-\rho_{t+1:T})
\right].
\end{align*}
The two terms on the last line are exactly
$L'_\mu(\pi)$ and $\Delta(\mu,\pi)$. The lower bound follows from
$-\Delta(\mu,\pi)\ge-|\Delta(\mu,\pi)|$.
\end{proof}

\subsection{Suffix TV via maximal coupling}

\begin{lemma}[Suffix TV bound]
\label{lem:suffix-tv}
If $D_j\le\ell_j$ pathwise for all $j>t$, then for any $s_{t+1}$,
\begin{equation}
\DTV(P_\mu^{t+1:T}(\cdot\mid s_{t+1}),P_\pi^{t+1:T}(\cdot\mid s_{t+1}))
\le 1-\prod_{j=t+1}^T(1-\ell_j)\le \sum_{j=t+1}^T\ell_j.
\end{equation}
\end{lemma}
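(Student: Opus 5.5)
We prove the statement by building an explicit coupling of the two suffix laws, step by step. Fix the state $s_{t+1}$. At each position $j=t+1,\dots,T$, given a common current state $s_j$, couple the next tokens $Y^\mu_j\sim\mu(\cdot\mid s_j)$ and $Y^\pi_j\sim\pi(\cdot\mid s_j)$ by a maximal coupling. This is the standard coupling-lemma construction, and it gives
\[
\Pr\big(Y^\mu_j\neq Y^\pi_j\mid s_j\big)=\DTV\big(\mu(\cdot\mid s_j),\pi(\cdot\mid s_j)\big)=D_j(s_j).
\]

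Once the two chains disagree at some step, we let them continue independently (or by any valid coupling) from their own states. Their marginals remain the correct laws $P_\mu^{t+1:T}(\cdot\mid s_{t+1})$ and $P_\pi^{t+1:T}(\cdot\mid s_{t+1})$. This holds because at every step each chain's conditional law of the next token, given its own history, is the right one.

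Next, bound the probability that the chains agree on the entire suffix. The event that they agree through step $j$ is the intersection of agreement through step $j-1$ with agreement at step $j$. On the event of agreement through $j-1$, the shared state $s_j$ is reached along a rollout-consistent path. The hypothesis $D_j\le\ell_j$ pathwise therefore gives a conditional agreement probability at step $j$ of at least $1-\ell_j$. The tower property then yields
\[
\Pr(\text{full agreement})\ge\prod_{j=t+1}^T(1-\ell_j),
\]
by induction on $j$, conditioning successively on $\mathcal F_{j-1}$. The coupling inequality $\DTV(P,Q)\le\Pr(X\neq Y)$ then gives the first bound,
\[
\DTV\le 1-\prod_{j=t+1}^T(1-\ell_j).
\]

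The second inequality is the elementary bound $1-\prod_j(1-a_j)\le\sum_j a_j$ for $a_j\in[0,1]$. We prove it by induction using $\prod_{j\le k}(1-a_j)\ge\prod_{j<k}(1-a_j)-a_k$, which is a union bound. If some $\ell_j>1$, the inequality is trivial because TV is at most $1$, so we may assume $\ell_j\in[0,1]$.

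The main subtlety is the measure-theoretic bookkeeping in the induction step. The pathwise hypothesis must apply at the states actually reached on the agreement event; these lie on the common-support rollout, so "pathwise" should be read as holding for every prefix the coupled chains can reach. We must also check that the concatenation of stepwise maximal couplings has the correct marginals. We would state this as a short induction on the joint law of $(Y^\mu_{t+1:j},Y^\pi_{t+1:j})$.
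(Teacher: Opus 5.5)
Your proposal is correct and follows the paper's proof essentially step for step: a stepwise maximal coupling that keeps the two suffixes together with probability at least $\prod_{j=t+1}^T(1-\ell_j)$, then the coupling inequality $\DTV(P,Q)\le\Pr(X\ne Y)$, then the union-type bound $1-\prod_j(1-\ell_j)\le\sum_j\ell_j$ after reducing to $\ell_j\in[0,1]$. That reduction is best read, as the paper implicitly does, as truncating $\ell_j$ to $\min\{\ell_j,1\}$ rather than as a case being ``trivial'', since the middle expression is literally false when two or more $\ell_j$ exceed one. Your extra bookkeeping (independent continuation after the first disagreement, an induction on the joint law to confirm the marginals, and reading ``pathwise'' on the prefixes reachable on the agreement event, which lie in the common support) makes explicit what the paper's proof leaves implicit.
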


\begin{proof}
Construct a stepwise maximal coupling of the two suffix processes.
As long as the two sampled suffixes have not disagreed, they share the
same state $s_j$. The one-step disagreement probability at that state is
\[
\DTV\!\left(\mu(\cdot\mid s_j),\pi(\cdot\mid s_j)\right)
=D_j
\le \ell_j .
\]
Therefore the conditional probability of no disagreement over the whole
suffix is at least
\[
\prod_{j=t+1}^T(1-\ell_j).
\]
The coupling characterization of total variation gives
\begin{align*}
\DTV\!\left(
P_\mu^{t+1:T}(\cdot\mid s_{t+1}),
P_\pi^{t+1:T}(\cdot\mid s_{t+1})
\right)
&\le
\Pr\!\left(Y_{t+1:T}^{\mu}\ne Y_{t+1:T}^{\pi}\mid s_{t+1}\right)\\
&\le
1-\prod_{j=t+1}^T(1-\ell_j).
\end{align*}
Since total variation is at most one, we may take
$\ell_j\in[0,1]$. The union bound then yields
\[
1-\prod_{j=t+1}^T(1-\ell_j)
\le
\sum_{j=t+1}^T\ell_j .
\]
\end{proof}

\subsection{Remaining-horizon bound on the surrogate residual}

\begin{proposition}[Surrogate-residual bound under token-level divergence thresholds]
A maximal-coupling argument on the suffix likelihood ratio gives the following bound on
the surrogate residual. This is the first inequality of
Equation~\eqref{eq:residual-bound} in Section~\ref{sec:residual-bound}.
\label{prop:residual-bound}
If $D_t\le\ell_t$ pathwise for all $t$, then
\begin{equation}
|\Delta(\mu,\pi)|\;\le\;
4\xi\sum_{t=1}^{T-1}u_t\,\sum_{j=t+1}^T\ell_j.
\end{equation}
\end{proposition}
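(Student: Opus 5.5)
The plan is to bound $\Delta(\mu,\pi)$ term by term in $t$, conditioning on the prefix $s_{t+1}=(x,y_{\le t})$. Conditioning separates the current-token factor $(\rho_t-1)$ from the suffix factor $(1-\rho_{t+1:T})$. The suffix factor is then controlled by the suffix TV bound (Lemma~\ref{lem:suffix-tv}), and the current-token factor by the identity $\DTV=\tfrac12\E_\mu|\rho_t-1|$.

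\textbf{Step 1 (drop the last term).} The $t=T$ term vanishes, since $\rho_{T+1:T}=1$. This is why the sum runs only over $t\le T-1$.

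\textbf{Step 2 (condition on the prefix).} Fix $t\le T-1$ and use the tower property with respect to $s_{t+1}$. Because $y_t$ is part of $s_{t+1}$, the ratio $\rho_t$ is $s_{t+1}$-measurable. Hence the $t$-th term equals $\E_\mu[(\rho_t-1)\,g_t(s_{t+1})]$, where
\[
g_t(s_{t+1}):=\E_\mu\!\left[R(x,y)(1-\rho_{t+1:T})\mid s_{t+1}\right].
\]

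\textbf{Step 3 (bound the suffix factor).} Since $\rho_{t+1:T}$ is exactly the likelihood ratio of $P_\pi^{t+1:T}(\cdot\mid s_{t+1})$ against $P_\mu^{t+1:T}(\cdot\mid s_{t+1})$, importance sampling gives
\[
g_t(s_{t+1})=\E_{P_\mu^{t+1:T}}[R]-\E_{P_\pi^{t+1:T}}[R].
\]
Using $|R|\le\xi$, this difference is at most $2\xi$ times the suffix total variation. Lemma~\ref{lem:suffix-tv} then yields $|g_t(s_{t+1})|\le 2\xi\sum_{j=t+1}^T\ell_j$ pathwise. Lemma~\ref{lem:suffix-tv} applies because $D_j\le\ell_j$ holds pathwise for every $j$.

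\textbf{Step 4 (bound the current-token factor).} By Step 3, the $t$-th term is at most $2\xi\sum_{j>t}\ell_j\cdot\E_\mu|\rho_t-1|$. Conditioning further on $s_t$ gives $\E_{y_t\sim\mu(\cdot\mid s_t)}|\rho_t-1|=2D_t$, so $\E_\mu|\rho_t-1|=2u_t$. The $t$-th term is therefore bounded by $4\xi u_t\sum_{j=t+1}^T\ell_j$. Summing over $t$ and applying the triangle inequality gives the claim.

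The main obstacle is Step 3. It requires checking that the conditional expectation of $R\,\rho_{t+1:T}$ under $\mu$ equals the expectation of $R$ under the $\pi$-suffix law. This holds under the common-support assumption, and it lets the coupling lemma be applied pointwise in $s_{t+1}$ rather than in expectation. Care is also needed with $2\xi$ versus $\xi$: the centering trick $\E_P[R]-\E_Q[R]\le(\sup R-\inf R)\,\DTV$ would give $2\xi$ and hence the constant $4\xi$ overall, which matches the stated bound.
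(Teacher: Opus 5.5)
Your proof is correct and follows essentially the same route as the paper: condition on $s_{t+1}$, control the suffix factor through Lemma~\ref{lem:suffix-tv}, and use $\E_\mu|\rho_t-1|=2u_t$. The only cosmetic difference is that you keep $R$ inside the conditional expectation and bound $|\E_{P_\mu^{t+1:T}}[R]-\E_{P_\pi^{t+1:T}}[R]|\le 2\xi\,\DTV$, whereas the paper first pulls out $|R|\le\xi$ and uses $\E_\mu[|1-\rho_{t+1:T}|\mid s_{t+1}]=2\,\DTV$. Both give the same constant $4\xi$.
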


\begin{proof}
Lemma~\ref{lem:exact-id} and $|R(x,y)|\le\xi$ give
\begin{equation*}
|\Delta(\mu,\pi)|
\le
\xi\sum_{t=1}^T
\E_\mu\!\left[
|\rho_t-1|\,|1-\rho_{t+1:T}|
\right].
\end{equation*}
Fix $t<T$ and condition on $s_{t+1}$. The future likelihood ratio
$\rho_{t+1:T}$ is the Radon--Nikodym derivative of the suffix law
under $\pi$ with respect to the suffix law under $\mu$, so
\begin{align*}
\E_\mu\!\left[|1-\rho_{t+1:T}|\mid s_{t+1}\right]
&=
\sum_{y_{t+1:T}}
P_\mu^{t+1:T}(y_{t+1:T}\mid s_{t+1})
\left|
1-
\frac{P_\pi^{t+1:T}(y_{t+1:T}\mid s_{t+1})}
{P_\mu^{t+1:T}(y_{t+1:T}\mid s_{t+1})}
\right|\\
&=
2\DTV\!\left(
P_\mu^{t+1:T}(\cdot\mid s_{t+1}),
P_\pi^{t+1:T}(\cdot\mid s_{t+1})
\right)\\
&\le
2\sum_{j=t+1}^T\ell_j ,
\end{align*}
where the last line uses Lemma~\ref{lem:suffix-tv}. The sampled-token
ratio contributes
\begin{align*}
\E_{y_t\sim\mu(\cdot\mid s_t)}
\!\left[|\rho_t-1|\mid s_t\right]
&=
\sum_{y_t}\mu(y_t\mid s_t)
\left|
\frac{\pi(y_t\mid s_t)}{\mu(y_t\mid s_t)}-1
\right|\\
&=
2D_t .
\end{align*}
Taking expectation over $s_t\sim d_t^\mu$ gives
$\E_\mu|\rho_t-1|=2u_t$. Combining the two displayed bounds,
\begin{equation*}
\E_\mu\!\left[
|\rho_t-1|\,|1-\rho_{t+1:T}|
\right]
\le
4u_t\sum_{j=t+1}^T\ell_j .
\end{equation*}
The term $t=T$ is zero because $\rho_{T+1:T}=1$. Summing over
$t=1,\dots,T-1$ proves the claim.
\end{proof}

\subsection{\CPPO{} policy-improvement bound (Theorem~\ref{thm:pact-main})}

\begin{proof}[Proof of Theorem~\ref{thm:pact-main}]
We prove the theorem in three steps.

\emph{Step 1: reduce the residual to a weighted sum of token-level
divergences.}
The constraint $w_tD_t\le c_t$ in~\eqref{eq:prefix-feasible-main} implies
\[
D_t\le \ell_t:=\frac{c_t}{w_t},
\qquad
\bar\ell:=\max_t\ell_t .
\]
By Proposition~\ref{prop:residual-bound},
\begin{equation*}
|\Delta(\mu,\pi)|
\le
4\xi\sum_{t=1}^{T-1}u_t\sum_{j=t+1}^T\ell_j
\le
\sum_{t=1}^{T-1}\lambda_tu_t,
\qquad
\lambda_t:=4\xi\bar\ell(T-t).
\end{equation*}

\emph{Step 2: convert the prefix constraints into prefix inequalities in
expectation.}
Taking expectations in the prefix-budget constraints gives, for every
$m=1,\dots,T-1$,
\[
\sum_{j=1}^m w_ju_j
\le
\Db\sum_{j=1}^m w_j .
\]
Define the centered prefix slack
\[
S_m:=\sum_{j=1}^m w_j(u_j-\Db),
\qquad
S_0:=0 .
\]
Then $S_m\le0$ for every $m<T$. Also define
\[
\Delta S_t:=S_t-S_{t-1}=w_t(u_t-\Db),
\qquad
r_t:=\frac{\lambda_t}{w_t}.
\]

\emph{Step 3: apply Abel summation.}
Using the previous definitions,
\begin{align*}
\sum_{t=1}^{T-1}\lambda_tu_t
-\Db\sum_{t=1}^{T-1}\lambda_t
&=
\sum_{t=1}^{T-1}\lambda_t(u_t-\Db)\\
&=
\sum_{t=1}^{T-1}r_t\,\Delta S_t\\
&=
r_{T-1}S_{T-1}
+\sum_{t=1}^{T-2}(r_t-r_{t+1})S_t .
\end{align*}
By assumption, $r_t$ is non-increasing, so
$r_t-r_{t+1}\ge0$; also $r_{T-1}\ge0$ and $S_t\le0$. Hence the
right-hand side is non-positive, and therefore
\begin{equation*}
\sum_{t=1}^{T-1}\lambda_tu_t
\le
\Db\sum_{t=1}^{T-1}\lambda_t .
\end{equation*}
Finally,
\[
\sum_{t=1}^{T-1}\lambda_t
=
4\xi\bar\ell\sum_{t=1}^{T-1}(T-t)
=
2\xi T(T-1)\bar\ell .
\]
Combining this residual bound with Lemma~\ref{lem:exact-id} gives
\[
J(\pi)-J(\mu)
\ge
L'_\mu(\pi)-2\xi T(T-1)\bar\ell\Db .
\]
If $c_t\equiv\delta$ and $w_t\in[\wmin,1]$, then
\[
\bar\ell
=
\max_t\frac{\delta}{w_t}
\le
\frac{\delta}{\wmin}.
\]
Substituting this into the bound gives
Equation~\eqref{eq:pact-main-impl}.
\end{proof}

\subsection{Linear schedule satisfies the monotonicity}
\label{app:prop-linear-mono}

\begin{proposition}[Linear schedule monotonicity]
\label{prop:linear-mono}
For $w_t=1-\tfrac{1-\wmin}{T-1}(t-1)$ and $g_t:=(T-t)/w_t$,
$g_t-g_{t+1}=\wmin/(w_tw_{t+1})>0$, so $r_t=4\xi\bar\ell\,g_t$ is
strictly decreasing on $t=1,\dots,T-1$. Therefore
Theorem~\ref{thm:pact-main} applies to the implemented schedule.
\end{proposition}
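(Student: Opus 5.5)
The plan is a direct algebraic computation: write $g_t-g_{t+1}$ over the common denominator $w_tw_{t+1}$ and simplify the numerator using the fact that the linear schedule has constant decrement. Then pass from $g_t$ to $r_t=\lambda_t/w_t$, which is $g_t$ scaled by a nonnegative constant.

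First I fix notation. Let $a:=\tfrac{1-\wmin}{T-1}$, so that $w_t=1-a(t-1)$ and $w_{t+1}=w_t-a$. Assuming $\wmin\in(0,1]$, every weight satisfies $w_t\in[\wmin,1]$, so all $w_t>0$ and $g_t$ is well defined for $t=1,\dots,T-1$. If $T=1$ there is nothing to check, so I take $T\ge2$.

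Next I compute the difference:
\[
g_t-g_{t+1}=\frac{(T-t)w_{t+1}-(T-t-1)w_t}{w_tw_{t+1}}
=\frac{w_t-(T-t)a}{w_tw_{t+1}} .
\]
The second equality uses $w_{t+1}=w_t-a$. Expanding $w_t$ in the numerator gives
\[
1-a(t-1)-a(T-t)=1-a(T-1)=\wmin .
\]
This is the key (and essentially only) cancellation: the $t$-dependence drops out because $a$ is chosen exactly so that $a(T-1)=1-\wmin$. It yields $g_t-g_{t+1}=\wmin/(w_tw_{t+1})$, which is strictly positive since $\wmin>0$ and both weights are positive.

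Finally, $r_t=\lambda_t/w_t=4\xi\bar\ell\,(T-t)/w_t=4\xi\bar\ell\,g_t$, where $4\xi\bar\ell$ does not depend on $t$. Hence $r_t-r_{t+1}=4\xi\bar\ell\,(g_t-g_{t+1})\ge0$. This is strict whenever $\xi\bar\ell>0$; if $\xi\bar\ell=0$ the bound in Theorem~\ref{thm:pact-main} is trivial anyway. So the non-increasing hypothesis of Theorem~\ref{thm:pact-main} holds on $t=1,\dots,T-1$, and the theorem applies to the implemented schedule~\eqref{eq:linear-w}.

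There is no real obstacle here. The only points needing care are the numerator simplification and noting that $\wmin>0$ is what guarantees both positivity of the weights and strict decrease.
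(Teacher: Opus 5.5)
Your proposal is correct and matches the paper's proof: the same common-denominator computation, the same substitution $w_{t+1}=w_t-c$ with $c=(1-\wmin)/(T-1)$, and the same cancellation $1-c(T-1)=\wmin$, followed by scaling by the constant $4\xi\bar\ell$. Your added caveats are minor refinements the paper omits: strictness of $r_t$ needs $\xi\bar\ell>0$, and $T\ge 2$ is needed for the schedule to be defined.
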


\begin{proof}
Write
\[
c:=\frac{1-\wmin}{T-1},
\qquad
w_t=1-c(t-1),
\qquad
g_t=\frac{T-t}{w_t}.
\]
Then
\begin{align*}
g_t-g_{t+1}
&=
\frac{(T-t)w_{t+1}-(T-t-1)w_t}{w_tw_{t+1}}\\
&=
\frac{w_t-c(T-t)}{w_tw_{t+1}}\\
&=
\frac{1-c(T-1)}{w_tw_{t+1}}\\
&=
\frac{\wmin}{w_tw_{t+1}}
>0 .
\end{align*}
Thus $g_t$ is strictly decreasing, and
$r_t=4\xi\bar\ell\,g_t$ is strictly decreasing as well.
\end{proof}

\subsection{Corollaries: uniform threshold and implementation threshold}
\label{app:corollaries}

These two corollaries supply the residual-constant comparisons used in
the discussion after Theorem~\ref{thm:pact-main} in
Section~\ref{sec:thm}: Corollary~\ref{cor:uniform} gives the uniform
token-level threshold case ($\Db/\delta$ ratio) and
Corollary~\ref{cor:impl} the position-dependent implementation case
($\Db/(\delta\wmin)$ ratio).

\begin{corollary}[Uniform token-level threshold]
\label{cor:uniform}
If $c_t=w_t\delta$, equivalently $\ell_t\equiv\delta$, then
$\bar\ell=\delta$ and
\begin{equation}
J(\pi)-J(\mu)
\;\ge\;
L'_\mu(\pi)-2\xi T(T-1)\delta\Db .
\end{equation}
The residual constants are
\[
C_{\CPPOSHORT{}}
:=2\xi T(T-1)\delta\Db,
\qquad
C_{\mathrm{uniform}}
:=2\xi T(T-1)\delta^2 .
\]
Thus
\[
\frac{C_{\CPPOSHORT{}}}{C_{\mathrm{uniform}}}
=
\frac{\Db}{\delta}.
\]
\textup{\CPPO{}} is tighter in the intended regime
$\Db<\delta\le 1$.
\end{corollary}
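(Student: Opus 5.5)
This corollary is a direct specialization of Theorem~\ref{thm:pact-main}, so I will verify that its hypotheses hold under the choice $c_t=w_t\delta$ and then substitute. Setting $c_t=w_t\delta$ turns the token-level constraint $w_tD_t\le c_t$ into the uniform constraint $D_t\le\delta$. It follows that $\ell_t=c_t/w_t=\delta$ for every $t$, and hence $\bar\ell=\max_t\ell_t=\delta$.

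The remaining hypothesis of the theorem is that $r_t=\lambda_t/w_t=4\xi\delta(T-t)/w_t$ is non-increasing. The corollary leaves the weights unspecified, so I will state the assumptions explicitly: the prefix constraints $P_m\le\Db W_m$ hold $\mu$-a.s., and the weights are such that $(T-t)/w_t$ is non-increasing. For the linear schedule, the latter is exactly Proposition~\ref{prop:linear-mono}. For constant weights $w_t\equiv1$, it is immediate because $T-t$ decreases.

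Under these hypotheses, Theorem~\ref{thm:pact-main} gives $J(\pi)-J(\mu)\ge L'_\mu(\pi)-2\xi T(T-1)\bar\ell\Db$. Substituting $\bar\ell=\delta$ yields the displayed bound with $C_{\CPPOSHORT{}}=2\xi T(T-1)\delta\Db$.

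For the uniform baseline constant, the constraint $D_t\le\delta$ alone gives $u_t=\E[D_t]\le\delta$. Proposition~\ref{prop:residual-bound} with $\ell_j=\delta$ then bounds the residual by $4\xi\delta^2\sum_{t=1}^{T-1}(T-t)$. Since $\sum_{t=1}^{T-1}(T-t)=T(T-1)/2$, this equals $2\xi T(T-1)\delta^2=C_{\mathrm{uniform}}$. The ratio of the two constants is then $\Db/\delta$, which is less than one exactly when $\Db<\delta$. The additional condition $\delta\le1$ is harmless, since total variation is at most one, as already noted in the proof of Lemma~\ref{lem:suffix-tv}.

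The only delicate point is the monotonicity hypothesis on $r_t$, which the corollary does not restate. I would handle it by making the dependence on the weights explicit and citing Proposition~\ref{prop:linear-mono}. Everything else is substitution.
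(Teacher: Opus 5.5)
Your proposal is correct and follows the paper's own route: you specialize Theorem~\ref{thm:pact-main} to $\ell_t\equiv\delta$, so that $\bar\ell=\delta$, and you obtain $C_{\mathrm{uniform}}$ from Proposition~\ref{prop:residual-bound} using the pointwise bounds $u_t\le\delta$ and $\sum_{t=1}^{T-1}(T-t)=T(T-1)/2$. Restating the hypotheses the corollary inherits from the theorem, namely the a.s.\ prefix constraints and the monotonicity of $(T-t)/w_t$ (e.g.\ via Proposition~\ref{prop:linear-mono}), is a sensible clarification of what the statement leaves implicit.
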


\begin{corollary}[Position-dependent token-level threshold]
\label{cor:impl}
If $w_tD_t\le\delta$, equivalently $\ell_t=\delta/w_t$, with
$w_t\in[\wmin,1]$, then
$\bar\ell\le\delta/\wmin$, hence
\begin{equation}
J(\pi)-J(\mu)
\;\ge\;
L'_\mu(\pi)-
2\xi T(T-1)\frac{\delta}{\wmin}\Db .
\end{equation}
Against the position-independent token-level divergence constant
$C_{\mathrm{uniform}}(\delta)=2\xi T(T-1)\delta^2$, the
implemented \textup{\CPPO{}} constant is
\[
C_{\mathrm{impl}}
:=
2\xi T(T-1)\frac{\delta}{\wmin}\Db,
\qquad
\frac{C_{\mathrm{impl}}}{C_{\mathrm{uniform}}(\delta)}
=
\frac{\Db}{\delta\wmin}.
\]
It is tighter whenever $\Db<\delta\,\wmin$. The linear schedule
$w_t=1-\tfrac{1-\wmin}{T-1}(t-1)$ used throughout our
experiments satisfies the monotonicity assumption of
Theorem~\ref{thm:pact-main} by
Proposition~\ref{prop:linear-mono}.
\end{corollary}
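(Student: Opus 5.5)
The plan is to obtain Corollary~\ref{cor:impl} as a direct specialization of Theorem~\ref{thm:pact-main}. We take $c_t\equiv\delta$ for all $t$, so that $\ell_t=c_t/w_t=\delta/w_t$ and the token-level part of~\eqref{eq:prefix-feasible-main} is exactly the hypothesis $w_tD_t\le\delta$. We also assume, as the corollary implicitly does, that the prefix constraints $P_m\le\Db W_m$ for $m<T$ hold $\mu$-a.s. These are the standing constraints under which the \CPPO{} comparison is made.

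\textbf{Step 1: check the monotonicity hypothesis.} The theorem needs $r_t=\lambda_t/w_t=4\xi\bar\ell\,(T-t)/w_t$ to be non-increasing. Since $4\xi\bar\ell$ is a nonnegative constant that does not depend on $t$, this reduces to $g_t=(T-t)/w_t$ being non-increasing. For general weights in $[\wmin,1]$ we would keep this as an assumption, exactly as in the theorem. For the linear schedule~\eqref{eq:linear-w} it is supplied by Proposition~\ref{prop:linear-mono}, which gives $g_t-g_{t+1}=\wmin/(w_tw_{t+1})>0$.

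\textbf{Step 2: bound the suffix factor and the residual constant.} Because $w_t\ge\wmin$, we have $\bar\ell=\max_t\delta/w_t\le\delta/\wmin$. Theorem~\ref{thm:pact-main} then yields
$J(\pi)-J(\mu)\ge L'_\mu(\pi)-2\xi T(T-1)\bar\ell\Db$.
The subtracted term is increasing in $\bar\ell$, so replacing $\bar\ell$ by the upper bound $\delta/\wmin$ gives the stated inequality with constant $C_{\mathrm{impl}}=2\xi T(T-1)(\delta/\wmin)\Db$. This is precisely~\eqref{eq:pact-main-impl}.

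\textbf{Step 3: compare with the uniform constant.} We recall that under a uniform threshold $D_t\le\delta$, Proposition~\ref{prop:residual-bound} with $\ell_t\equiv\delta$ and $u_t\le\delta$ gives $C_{\mathrm{uniform}}(\delta)=4\xi\delta^2\sum_{t<T}(T-t)=2\xi T(T-1)\delta^2$. Dividing the two constants gives $C_{\mathrm{impl}}/C_{\mathrm{uniform}}=\Db/(\delta\wmin)$, which is less than $1$ exactly when $\Db<\delta\wmin$.

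The only point needing care is Step 1. We must make sure the monotonicity of $r_t$ is evaluated with the true $\bar\ell$, not with its upper bound. This causes no difficulty, because $\bar\ell$ enters $r_t$ only as a $t$-independent positive factor. Everything else is substitution.
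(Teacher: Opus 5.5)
Your proposal is correct and follows the paper's own route. You specialize Theorem~\ref{thm:pact-main} with $c_t\equiv\delta$, bound $\bar\ell=\max_t\delta/w_t\le\delta/\wmin$ and substitute, get monotonicity for the linear schedule from Proposition~\ref{prop:linear-mono}, and divide by the uniform constant $2\xi T(T-1)\delta^2$ obtained from Proposition~\ref{prop:residual-bound} with $u_t\le\delta$. You also state explicitly that the prefix constraints, and for general weights the monotonicity of $(T-t)/w_t$, must be assumed; this is a useful clarification that the corollary's statement leaves implicit.
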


\subsection{Implementation slack from the initial prefix}
\label{app:initial-prefix-slack}

Under Equation~\eqref{eq:pact-feasible}, the first update that
moves the sampled-token ratio away from one matches the uniform
token-level divergence baseline threshold. This adds
a constant initial slack to the clean prefix inequality used in
Theorem~\ref{thm:pact-main}. The slack changes constants but not
the leading prefix-budget term.

\begin{proposition}[Initial prefix slack bound]
\label{prop:initial-prefix-slack}
Assume the weighted constraint gives
$D_t\le\bar\ell$ and, in expectation,
\[
\sum_{j=1}^m w_j u_j \le \Db W_m+\eta
\qquad (m=1,\dots,T-1)
\]
for some constant slack $\eta\ge0$. Under the same monotonicity
condition as Theorem~\ref{thm:pact-main},
\begin{equation}
|\Delta(\mu,\pi)|
\le
2\xi T(T-1)\bar\ell\Db
\;+\;
\eta\,\frac{4\xi\bar\ell(T-1)}{w_1}.
\end{equation}
For the implemented linear schedule, $w_1=1$. The initial prefix
slack in Equation~\eqref{eq:pact-feasible} has $\eta\le\delta$,
so its additional residual term is $O(T\bar\ell\delta)$.
\end{proposition}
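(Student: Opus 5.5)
We would prove the proposition by rerunning the Abel-summation argument from the proof of Theorem~\ref{thm:pact-main}, now tracking the extra slack $\eta$. First, Proposition~\ref{prop:residual-bound} together with $D_t\le\bar\ell$ gives
\[
|\Delta(\mu,\pi)|\le\sum_{t=1}^{T-1}\lambda_tu_t,\qquad \lambda_t=4\xi\bar\ell(T-t),
\]
exactly as in Step~1 of that proof.

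Next we change the centering of the prefix slack. Define
\[
S_m:=\sum_{j=1}^m w_j(u_j-\Db),\qquad S_0:=0.
\]
The hypothesis now gives $S_m\le\eta$ for every $m\le T-1$, rather than $S_m\le 0$. With $r_t=\lambda_t/w_t$, the same Abel identity yields
\[
\sum_{t<T}\lambda_t(u_t-\Db)=r_{T-1}S_{T-1}+\sum_{t=1}^{T-2}(r_t-r_{t+1})S_t .
\]
Every coefficient on the right is nonnegative: $r_{T-1}\ge 0$, and $r_t-r_{t+1}\ge0$ by the monotonicity assumption. So we may bound each $S_t$ above by $\eta$. The coefficients then telescope:
\[
r_{T-1}+\sum_{t=1}^{T-2}(r_t-r_{t+1})=r_1=\frac{4\xi\bar\ell(T-1)}{w_1}.
\]
Hence
\[
\sum_{t<T}\lambda_tu_t\le\Db\sum_{t<T}\lambda_t+\eta\,r_1 .
\]
Using $\sum_{t<T}\lambda_t=2\xi T(T-1)\bar\ell$, this is exactly the claimed bound.

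For the implementation remarks, $w_1=1$ follows directly from evaluating Equation~\eqref{eq:linear-w} at $t=1$. For the slack, the mask enforces $S_t\le\delta+\Db W_{t-1}$ pathwise on kept tokens. Since $W_{t-1}\le W_t$, this gives $\sum_{j\le m}w_jD_j\le\Db W_m+\delta$. Taking expectations yields the hypothesis with $\eta=\delta$, so the extra term is $4\xi\bar\ell(T-1)\delta=O(T\bar\ell\delta)$.

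The main obstacle is this last step: checking that the pathwise prefix inequality really holds for every prefix $m$ along the rollout. The mask only tests $I_t$ on tokens whose update pushes the ratio away from one. We would argue, as the theorem itself implicitly assumes, that the constraint is imposed $\mu$-a.s.\ on all prefixes. Without that assumption, the slack hypothesis would not follow from the mask alone. The Abel step itself is routine, since the sign structure is identical to the theorem's proof.
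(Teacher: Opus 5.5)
Your proposal is correct and follows the paper's proof essentially step for step. It uses the same Abel-summation identity with nonnegative coefficients $r_{T-1}$ and $r_t-r_{t+1}$, bounds each prefix slack by $\eta$, telescopes to $\eta r_1=\eta\,4\xi\bar\ell(T-1)/w_1$, and identifies $\eta\le\delta$ via $S_m\le\delta+\Db W_{m-1}\le\delta+\Db W_m$. The paper names the centered slack $A_m$ to avoid clashing with the running sum $S_t$ of Equation~\eqref{eq:pact-feasible}, which you reuse in the last step.

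Your caveat about that last step is well taken. The paper simply asserts that Equation~\eqref{eq:pact-feasible} holds at every prefix along the rollout. That is the same $\mu$-a.s.\ feasibility assumption stated explicitly in Theorem~\ref{thm:pact-main}, so your argument is exactly as rigorous as the paper's on this point.
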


\begin{proof}
Define
\[
A_m:=\sum_{j=1}^m w_j(u_j-\Db),
\qquad
A_0:=0 .
\]
The assumed slack gives $A_m\le\eta$ for every $m<T$. As in the
proof of Theorem~\ref{thm:pact-main}, set
\[
\lambda_t:=4\xi\bar\ell(T-t),
\qquad
r_t:=\frac{\lambda_t}{w_t}.
\]
Abel summation gives
\begin{align*}
\sum_{t=1}^{T-1}\lambda_tu_t
-\Db\sum_{t=1}^{T-1}\lambda_t
&=
r_{T-1}A_{T-1}
+\sum_{t=1}^{T-2}(r_t-r_{t+1})A_t .
\end{align*}
Because $r_t$ is non-increasing, all coefficients on the right are
nonnegative. Therefore
\begin{align*}
r_{T-1}A_{T-1}
+\sum_{t=1}^{T-2}(r_t-r_{t+1})A_t
&\le
\eta\left(
r_{T-1}+\sum_{t=1}^{T-2}(r_t-r_{t+1})
\right)\\
&=
\eta r_1\\
&=
\eta\,\frac{4\xi\bar\ell(T-1)}{w_1}.
\end{align*}
Combining this with Proposition~\ref{prop:residual-bound} proves the
displayed residual bound.

It remains to identify the slack induced by the implemented
constraint test. Equation~\eqref{eq:pact-feasible} implies
\[
S_m
\le
\delta+\Db W_{m-1}
\le
\delta+\Db W_m .
\]
Taking expectations gives the proposition's prefix-slack condition
with $\eta\le\delta$.
\end{proof}

\subsection{Sequence-level methods as special cases}
\label{app:seq-comparison}

Let $W:=\sum_{t<T}w_t$.

\noindent\textbf{Uniform token-level divergence masks recover the position-agnostic branch.}
If $w_t\equiv 1$ and the prefix budget is removed, only the token-level
threshold $D_t\le\delta$ remains; this is the position-agnostic
specialization underwriting the textbook bound, of which the
uniform-TV mask is one implementation. \TRM{}-Max corresponds to
the sequence-level form of this uniform token-level threshold:
\[
\max_t D_t\le\delta
\quad\Longrightarrow\quad
u_t\le\delta \quad \text{for every } t .
\]
It therefore recovers the same quadratic constant
\[
C_{\mathrm{max}}
=
2\xi T(T-1)\delta^2 .
\]
\CPPO{} adds the prefix conditions in~\eqref{eq:prefix-feasible-main}.
Under this uniform token-level threshold, the constant becomes
\[
C_{\CPPOSHORT{}}
=
2\xi T(T-1)\delta\Db,
\qquad
\frac{C_{\CPPOSHORT{}}}{C_{\mathrm{max}}}
=
\frac{\Db}{\delta}.
\]

\noindent\textbf{\TRM{}-Avg is a terminal-only relaxation of the prefix constraints.}
With $w_t\equiv 1$ and only the final prefix constraint kept, a
direct bound on the residual gives
\[
C_{\mathrm{terminal}}
\le
4\xi\bar\ell(T-1)^2\Db .
\]
\CPPO{}'s Abel bound using every prefix at $w\equiv 1$ is
\[
C_{\CPPOSHORT{}}^{w\equiv 1}
=
2\xi T(T-1)\bar\ell\Db .
\]
Hence
\begin{equation}
\frac{C_{\mathrm{terminal}}}{C_{\CPPOSHORT{}}^{w\equiv 1}}
\;=\;2-\frac{2}{T}.
\end{equation}
For arbitrary $w_t>0$, the weighted min-max inequality
\[
W\max_{t<T}\frac{\lambda_t}{w_t}
\ge
\sum_{t<T}\lambda_t
\]
yields
\[
C_{\mathrm{terminal}}(w)
\ge
C_{\CPPOSHORT{}}(w),
\]
with equality if and only if $w_t\propto\lambda_t\propto T-t$.
The same algebra applies to length-neutral KL or TV variants of
the sequence-level comparison.

\subsection{Product-form suffix bound}
\label{app:product}

The main paper uses the linear-suffix specialization of the
residual bound. Lemma~\ref{lem:suffix-tv} also gives the
tighter \emph{product} bound
\[
\beta_t
:=
1-\prod_{j=t+1}^T(1-\ell_j)
\le
\sum_{j=t+1}^T\ell_j .
\]
Substituting $\beta_t$ for the linear sum yields
\[
|\Delta(\mu,\pi)|
\le
4\xi\sum_{t=1}^{T-1}u_t\,\beta_t,
\]
with corresponding $\lambda_t^\beta:=4\xi\beta_t$. Under the
analogous monotonicity $r_t^\beta:=\lambda_t^\beta/w_t$
non-increasing, the same Abel-summation step gives, with
$\ell_j\equiv\delta$,
\begin{align*}
|\Delta|
&\le
\Db\sum_{t=1}^{T-1}\lambda_t^\beta\\
&=
4\xi\Db\sum_{t=1}^{T-1}\bigl[1-(1-\delta)^{T-t}\bigr]\\
&=
O(T\,\Db),
\end{align*}
i.e.\ a true $O(T)$ branch (not requiring $\Db=O(1/T)$).
The monotonicity for $\lambda_t^\beta/w_t$ is not identical
to the linear-branch monotonicity for $\lambda_t/w_t$, since
$\beta_t$ saturates at $1$ for early positions and at
$\to 0$ as $t\to T$, while $T-t$ does not. For small
$\delta$, $\beta_t\approx(T-t)\delta$ and the two
monotonicity conditions coincide. We do not use this branch
as the main quantitative claim in the paper.

\subsection{Abel summation and tightness}
\label{app:abel-discussion}

The proof of Theorem~\ref{thm:pact-main} relies on prefix
constraints rather than a pointwise estimate. The residual bound
contributes the decreasing remaining-horizon coefficients
$\lambda_t\propto T-t$, while the prefix constraints supply the
family of prefix inequalities $S_m\le 0$.
A pointwise upper bound on $\sum_{t<T}\lambda_t u_t$ ignores the
prefix structure entirely and yields nothing better than
$O(T\,\delta\,\bar\ell)$; Abel summation, the discrete analogue of
integration by parts, instead rewrites the same sum as
\[
\sum_{t<T}\lambda_t u_t
-\Db\sum_{t<T}\lambda_t
=
r_{T-1}S_{T-1}
+\sum_{t=1}^{T-2}(r_t-r_{t+1})\,S_t,
\]
which expresses the residual as a sum of prefix slacks $S_t\le 0$
weighted by the gaps $r_t-r_{t+1}$. Both factors carry the correct
sign as soon as $r_t=\lambda_t/w_t$ is non-increasing, so every
term contributes to tightening the bound. The condition
$r_t-r_{t+1}\ge 0$ is therefore the formal statement that the
constraint places larger weights where the remaining-horizon
coefficient is larger, and the bound saturates when the two
profiles are matched ($w_t\propto\lambda_t\propto T-t$).

\noindent\textbf{Sharpness.}
Among all nonnegative sequences $\{u_t\}$ satisfying the same
prefix inequalities $S_m\le 0$ for every $m$, the upper bound
\[
\sum_{t<T}\lambda_t u_t
\le
\Db\sum_{t<T}\lambda_t
\]
is attained by the saturating sequence $u_t=\Db$ whenever $r_t$ is
non-increasing. The bound is therefore tight for this class of
prefix constraints and for the residual bound in
Proposition~\ref{prop:residual-bound}: it cannot be improved without
strengthening one of them.

\subsection{Technical Lemmas}

\noindent\textbf{Total variation and $L^1$ distance.}
\[
\|P-Q\|_{\mathrm{TV}}
=
\frac12\|P-Q\|_1 .
\]

\noindent\textbf{Likelihood-ratio identity.}
\[
\E_{X\sim q}\left|1-\frac{p(X)}{q(X)}\right|
=
\int |q-p|
=
2\DTV(p,q).
\]

\noindent\textbf{Coupling characterization of total variation.}
\[
\DTV(P,Q)
=
\inf \Pr(X\ne Y),
\]
with maximal coupling attaining the infimum.

\noindent\textbf{Weighted averaging inequality.}
For $a_t,w_t>0$,
\[
\max_t\frac{a_t}{w_t}
\ge
\frac{\sum_t a_t}{\sum_t w_t},
\]
with equality if and only if $a_t/w_t$ is constant.

\noindent\textbf{Abel summation identity.}
For $\{r_t\}_{t=1}^n$ and partial sums $S_t=\sum_{j\le t}\Delta_j$,
\[
\sum_{t=1}^n r_t\Delta_t
=
r_nS_n+\sum_{t=1}^{n-1}(r_t-r_{t+1})S_t .
\]

\section{Soft-gate details}
\label{app:soft}

The main paper focuses on the hard \CPPO{} mask. Soft
trust-region gates can also attenuate gradients near the boundary
of the constraint set; this appendix collects the construction,
default choices, and the mixture-policy discussion deferred from
Section~\ref{sec:mask}. The soft variant is evaluated empirically in
the hard-vs-soft ablation of Section~\ref{sec:exp-ablations}
(right panel of Figure~\ref{fig:ablation-main}).

\subsection{Gradient-scaling interpretation}

The hard mask of Equation~\eqref{eq:hard-mask} tests the feasibility
condition $I_t$, which holds when both the token-level threshold
$w_tD_t\le\delta$ and the prefix constraint $S_t\le\delta+\Db W_{t-1}$ are
met. To attenuate gradients smoothly near this boundary rather than
dropping them, we measure how close a token is to violating $I_t$ with the
normalized violation score
\begin{equation}
x_t^{\CPPOSHORT{}}
:=\max\!\left\{
\frac{w_tD_t}{\delta},\,
\frac{S_t}{\delta+\Db W_{t-1}}
\right\},
\label{eq:soft-score}
\end{equation}
which is at most one exactly when $I_t$ holds and exceeds one in proportion
to the larger of the two constraint violations.

Multiplying the loss by a soft gate $g(x_t^{\CPPOSHORT{}})$ does not by itself
construct a softened policy; it scales each token's gradient contribution.
For a non-increasing gate $g\colon[0,\infty)\to[0,1]$ with
$g(x)=1$ for $x\le 1$ and $x g(x)\le 1$ for $x>1$, the
\emph{effective normalized violation} per token,
$x_t^{\CPPOSHORT{}}\,g(x_t^{\CPPOSHORT{}})$, is bounded by
\[
x_t^{\CPPOSHORT{}}\,g(x_t^{\CPPOSHORT{}})
\le
1 .
\]
This bounds the scaled violation that enters the gradient term. The
corresponding soft \CPPO{} mask is
\begin{equation}
M_t^{\mathrm{soft}}=
\begin{cases}
1, & \operatorname{sgn}(\hat A_t)(\rho_t-1)\le 0,\\
g(x_t^{\CPPOSHORT{}}), & \text{otherwise}.
\end{cases}
\end{equation}

\subsection{Mixture-policy construction}

To obtain a formal guarantee that mirrors Theorem~\ref{thm:pact-main},
define a mixture policy
\begin{equation}
\pi_g(\cdot\mid s)=(1-g_s)\,\mu(\cdot\mid s)+g_s\,\pi(\cdot\mid s),
\qquad g_s\in[0,1].
\end{equation}
For each state $s$,
\[
\pi_g(\cdot\mid s)-\mu(\cdot\mid s)
=
g_s\bigl(\pi(\cdot\mid s)-\mu(\cdot\mid s)\bigr).
\]
Therefore
\begin{align*}
\DTV(\mu(\cdot\mid s),\pi_g(\cdot\mid s))
&=
\frac12
\left\|
\pi_g(\cdot\mid s)-\mu(\cdot\mid s)
\right\|_1\\
&=
g_s\,\DTV(\mu(\cdot\mid s),\pi(\cdot\mid s)).
\end{align*}
Applying the token-level threshold and prefix-budget constraints to
$\pi_g$ therefore recovers an exact-TV guarantee with $D_t$
replaced by $g_tD_t$. We do not deploy $\pi_g$ in our experiments;
the construction is included to clarify what a soft-gate
\emph{guarantee} would require, and to distinguish it from the
effective-gradient interpretation above.

\subsection{Default gate and SAPO compatibility}

We use $g_{\mathrm{inv}}(x)=\min\{1,1/x\}$, which trivially satisfies
$xg(x)\le 1$ and $g(x)=1$ for $x\le 1$. The theorem chain only
requires the admissibility condition $xg(x)\le 1$, so any other
admissible gate can be substituted without changing the formal
guarantee. Soft gates introduced by \SAPO{}-style
schemes~\citep{gao2025soft} can be plugged in by applying
their attenuation function to the normalized
\CPPO{} score $x_t^{\CPPOSHORT{}}$ in place of a sampled-ratio score.

\section{Experiment details and per-benchmark breakdown}
\label{app:exp-details}

\noindent\textbf{Training stack.}
All runs use the \texttt{verl}-compatible GRPO/DAPO trainer with
group-normalized advantages, the \texttt{mask\_std\_0} filter (prompts
with zero advantage variance are skipped), no entropy regularizer, and
no KL-to-reference penalty in the loss. Optimizer is AdamW at a maximum
learning rate of $1\times 10^{-6}$.

\noindent\textbf{Rollout and update budget.}
At each iteration we draw a \emph{rollout batch} of prompts and unroll
\texttt{rollout.n} responses per prompt under the current policy
$\mu$, then split each rollout batch into \texttt{ministep} gradient
\emph{minibatches} of size \texttt{train\_bs} for the policy update.
Training rollouts are sampled from the actor's untruncated softmax;
validation rollouts use temperature $0.7$ and top-$p=0.95$ with $16$
samples per prompt. Table~\ref{tab:exp-config} reports the
per-model rollout / minibatch configuration
(\texttt{ministep}~$=\texttt{rollout\_bs}/\texttt{train\_bs}$).

\begin{table}[!ht]
\centering
\caption{Per-model rollout, update, and trust-region configuration.
``Prompts'' is the prompt count per rollout iteration, $n$ is the
number of responses per prompt, and ``updates'' is the number of
gradient minibatches per rollout iteration. The token-level threshold scale
$\delta$ is shared by \DPPO{} and \CPPO{}; the listed $\Db^{\min}$
values are the minimum values used by the Base-model warm-up
calibration, whose upper bound is dynamically bounded at $2\Db^{\min}$;
$\wmin$ is the weight
floor.}
\label{tab:exp-config}
\small
\setlength{\tabcolsep}{4pt}
\begin{tabular*}{\textwidth}{@{\extracolsep{\fill}}lcccccccc@{}}
\toprule
Model                  & Prompts & $n$ & Minibatch & Updates & $T_{\max}$ & $\delta$ & $\Db^{\min}$ & $\wmin$ \\
\midrule
Qwen3-1.7B             & 64                   & 8                  & 32                 & 2                 & 8\,k       & 0.15 & 0.015 & 0.8 \\
Qwen3-1.7B-Base        & 64                   & 8                  & 32                 & 2                 & 8\,k       & 0.15 & 0.020 & 0.8 \\
Qwen3-8B-Base          & 128                  & 8                  & 32                 & 4                 & 8\,k       & 0.15 & 0.020 & 0.8 \\
Qwen3-30B-A3B-Base     & 256                  & 16                 & 32                 & 8                 & 16\,k      & 0.20 & 0.020 & 0.8 \\
\bottomrule
\end{tabular*}
\end{table}

\noindent\textbf{Evaluation.}
We evaluate every checkpoint on AIME24, AIME25, and
AIME26. For each benchmark we report
$\mathrm{Avg}@16$ --- the success rate
computed over $16$ sampled completions per prompt under the
validation decoding configuration above. As a single summary
score we further report
\textbf{AIME24/25/26 $\mathrm{Avg}@16$}, the unweighted mean of the
three per-benchmark $\mathrm{Avg}@16$ values. The best value
reported in Tables~\ref{tab:main} and~\ref{tab:main-full} is the
highest AIME24/25/26 $\mathrm{Avg}@16$ attained on the validation
curve within $[0,T^{\mathrm{stop}}]$, where $T^{\mathrm{stop}}$ is
the matched evaluation horizon used for the training-curve plots;
the per-benchmark numbers in Table~\ref{tab:main-full} are the
components of the best mean (i.e.\ the per-benchmark scores at
the step that attained the best mean), \emph{not} the
per-benchmark maxima.

\noindent\textbf{Baselines and divergence scores.}
We group the baselines by the signal used to decide whether a
token update that moves the sampled-token ratio farther from one
remains inside the trust region. \GRPO{} and
\CISPO{} are sampled-ratio baselines: they operate directly
on the sampled importance ratio $\rho_t$ (with asymmetric
clip thresholds for \CISPO{}) and do not use a distributional
divergence score. The \GRPO{} row uses the Clip-Higher recipe, with
$(\epsilon_{\mathrm{low}},\epsilon_{\mathrm{high}})=(0.2,0.28)$
in the asymmetric clipping interval
$[1-\epsilon_{\mathrm{low}},1+\epsilon_{\mathrm{high}}]$; we denote
it as \GRPO{} in the tables for brevity. The remaining baseline
hyperparameters use the values recommended in their original
papers. \MinPro{} is a prefix-ratio baseline that
weights the sampled importance ratio by a non-cumulative
minimum-prefix-ratio surrogate, again without a divergence
score. \DPPO{} applies a uniform reduced-TV token-level mask with
the per-model token-level threshold scale $\delta$ in Table~\ref{tab:exp-config}.
\CPPO{} uses this matched scale inside the weighted and prefix
constraints. For all token-level
divergence masks in the main experiments, we estimate $D_t$
with a Top-$K$ ($K{=}20$) reduced-TV score. Following the
\DPPO{} approximation construction, this score keeps the top-$K$
rollout-policy tokens together with the sampled token and an
``other'' category, yielding a partitioned-TV lower bound on the
exact full-vocabulary TV. \DPPO{} and \CPPO{} use a matched Top-$K$
score and per-model threshold scale $\delta$, so the empirical
comparison isolates the prefix constraints rather than the token-level
divergence approximation. This reduced-TV lower bound is used for the
matched empirical mask, not as the exact-TV guarantee in
Theorem~\ref{thm:pact-main}. In Base-model runs, the listed
$\Db$ is the minimum value in the warm-up calibration described in
Section~\ref{sec:exp-setup}.
Specifically, for a rollout sequence with token-level divergences
$D_{1:T}$, the dynamic sequence budget is computed as
$\delta_b^{\text{seq}} =
\text{clamp}(\text{Quantile}(D_{1:T}, 0.9), \Db^{\min}, 2\Db^{\min})$,
where $\text{Quantile}(D_{1:T}, 0.9)$ represents the boundary value
above which the top $10\%$ largest divergences lie. Hence, the
$\delta_b = 0.03$ ablation corresponds to the setting where this
quantile-based calibration is disabled and $\delta_b$ is fixed
instead.
The Binary-TV
score is used only in the right panel of
Figure~\ref{fig:ablation-base}.
\TRM{}-Max and \TRM{}-Avg use KL
as the token-level divergence in their original form:
$M(y)=\ind[\max_t D^{\mathrm{KL}}_t\le\delta]$ and
$M(y)=\ind[T^{-1}\sum_t D^{\mathrm{KL}}_t\le\delta_{\mathrm{avg}}]$
respectively, with
$D^{\mathrm{KL}}_t=\mathrm{KL}\!\bigl(\mu(\cdot\mid s_t)\,\|\,\pi(\cdot\mid s_t)\bigr)$.
The comparison of Section~\ref{sec:theory}
(\TRM{}-Max $\leftrightarrow$ \CPPO{}'s uniform token-level threshold;
\TRM{}-Avg $\leftrightarrow$ terminal-only specialization of
\CPPO{}'s prefix constraints) does not depend on the choice
between TV and KL and applies to either. Each baseline's
corresponding threshold follows the value recommended in its
original paper.

\noindent\textbf{Batched mask computation.}
Algorithm~\ref{alg:pact} gives the readable one-response version.
The implementation applies the same recurrence to a batch tensor.
For a valid-token mask $M\in\{0,1\}^{B\times T}$, padding positions
are zeroed in the weights and in $Z=w\odot D$. A cumulative sum
along the response dimension gives the tensors $W$ and $S$ for all
positions at once, and a one-token right shift gives $W^{-}$ and
$S^{-}$ so that the threshold at token $t$ uses only the preceding
prefix. The final binary mask is then multiplied into the same
token-level ratio--advantage terms as the base \PPO{}-style clipped
objective.

\noindent\textbf{Base-model warm-up diagnostics.}
Figure~\ref{fig:base-deltab-diagnostics} visualizes the warm-up
calibration used by the three Base-model \CPPO{} runs.
During the initial exploration phase of Base-model training, the
average token-level divergence is initially large but decays rapidly
after a few steps. To prevent prematurely clipping these early
exploratory tokens before training stabilizes, a uniform
sequence-independent threshold can be overly restrictive. By utilizing
the 90th percentile of the sequence's own divergence profile, the
constraint dynamically adapts to the current sequence's statistics:
\begin{equation}
\delta_b^{\text{seq}} = \min\left(2\Db^{\min}, \max\left(\Db^{\min}, P_{90}(D_{1:T})\right)\right)
\end{equation}
where $P_{90}$ denotes the 90th percentile of the token-level
divergence sequence $D_{1:T}$. The top row of
Figure~\ref{fig:base-deltab-diagnostics} shows the mean effective
prefix-average threshold after calibration; the $\Db$ values in
Table~\ref{tab:exp-config} are the minimum values $\Db^{\min}$.
The bottom row shows the fraction of masked tokens rejected by the
prefix-budget condition.
The effective values remain well below the corresponding
token-level thresholds ($\delta=0.15$ or $0.20$), and the prefix-budget
mask fraction is concentrated early in training, which is the regime
where the average token-level divergence is exceptionally large
before rapidly decaying as the policy stabilizes.

\begin{figure}[!ht]
\centering
\includegraphics[width=\textwidth]{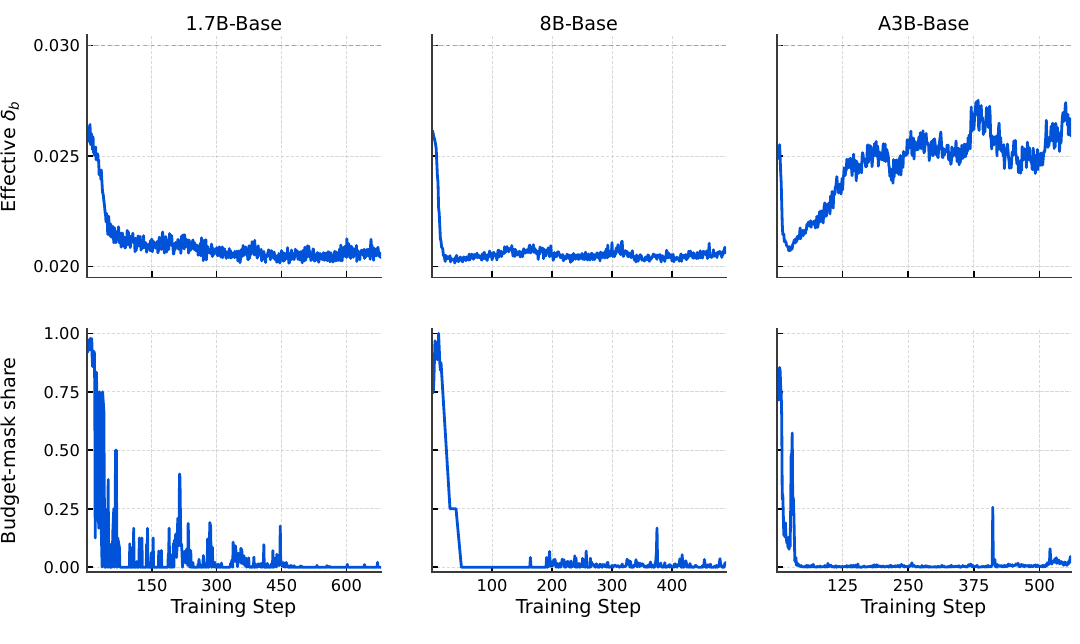}
\caption{Warm-up diagnostics for the three Base-model \CPPO{} runs.
Each column is one model. Top: mean effective $\Db$ after the
per-sequence warm-up calibration. Bottom: fraction of masked tokens
rejected by the prefix-budget condition.}
\label{fig:base-deltab-diagnostics}
\end{figure}

\FloatBarrier

\noindent\textbf{Full training diagnostics.}
Figures~\ref{fig:app-complete-1p7b}--\ref{fig:app-complete-a3b-base}
provide per-model training traces behind the aggregate results in
Table~\ref{tab:main}. The top row separates the three validation
components, AIME24, AIME25, and AIME26, while the bottom row reports
training reward, response length, and the relative log-probability
error between the rollout and training log-probabilities. These plots
are included as diagnostics rather than as additional selection rules:
the reported scores in Tables~\ref{tab:main} and~\ref{tab:main-full}
are still selected only by the best AIME24/25/26 mean within the
matched evaluation horizon. The relative log-probability error panels
are implementation stability checks and are not part of the training
objective or the formal guarantee.

\begin{figure}[!htbp]
\centering
\includegraphics[width=\textwidth]{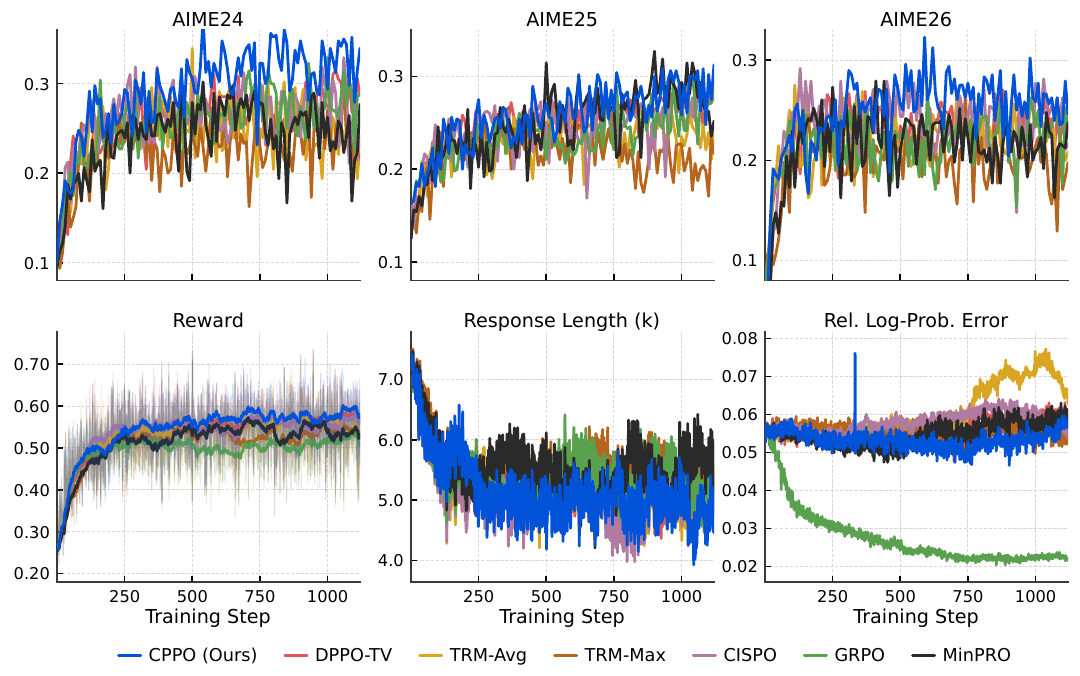}
\caption{Complete training diagnostics for Qwen3-1.7B
(post-trained). Top: AIME24, AIME25, and AIME26 validation
$\mathrm{Avg}@16$. Bottom: training reward, response length, and
relative log-probability error. Only reward is smoothed for
readability.}
\label{fig:app-complete-1p7b}
\end{figure}

\begin{figure}[!htbp]
\centering
\includegraphics[width=\textwidth]{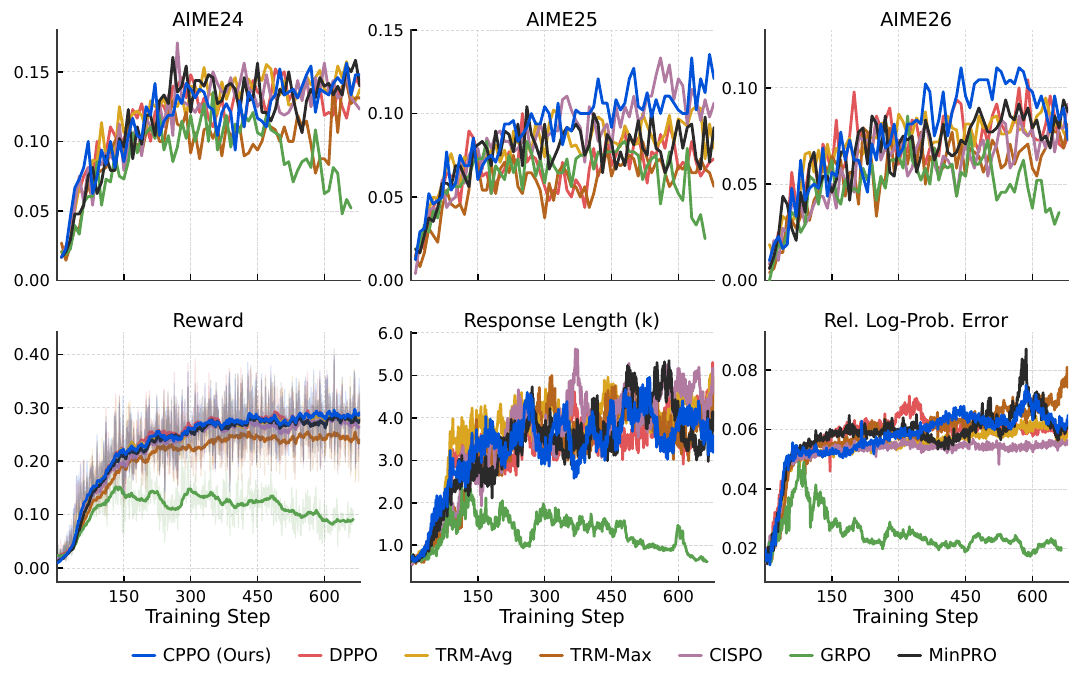}
\caption{Complete training diagnostics for Qwen3-1.7B-Base.
Top: AIME24, AIME25, and AIME26 validation $\mathrm{Avg}@16$.
Bottom: training reward, response length, and relative
log-probability error. Only reward is smoothed for readability.}
\label{fig:app-complete-1p7b-base}
\end{figure}

\begin{figure}[!htbp]
\centering
\includegraphics[width=\textwidth]{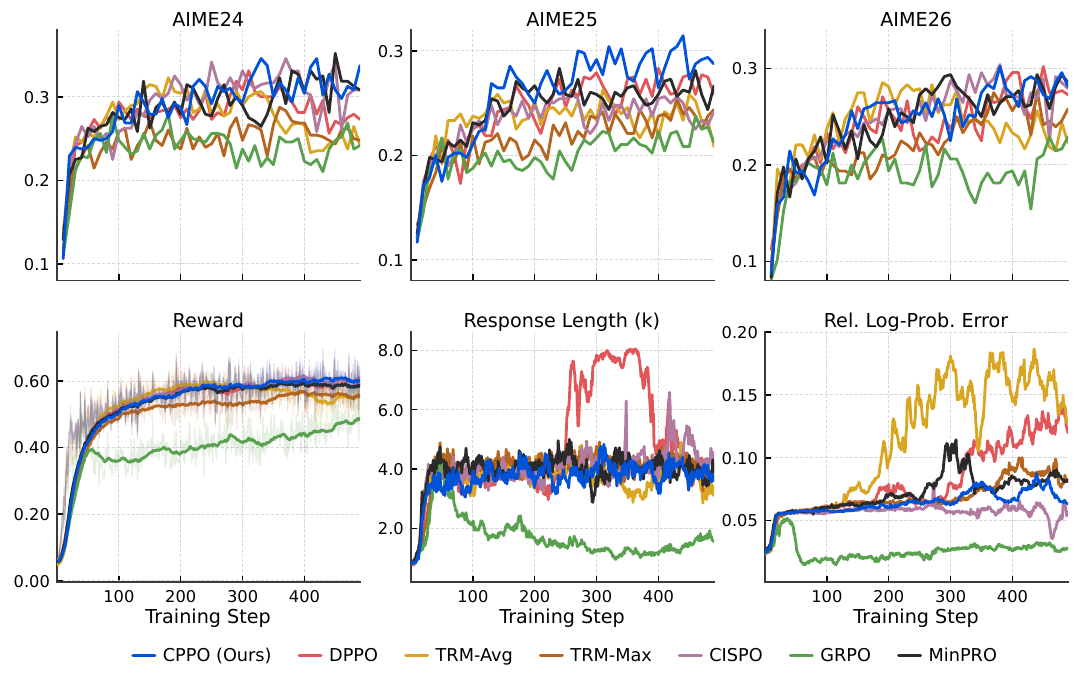}
\caption{Complete training diagnostics for Qwen3-8B-Base.
Top: AIME24, AIME25, and AIME26 validation $\mathrm{Avg}@16$.
Bottom: training reward, response length, and relative
log-probability error. Only reward is smoothed for readability.}
\label{fig:app-complete-8b-base}
\end{figure}

\begin{figure}[!htbp]
\centering
\includegraphics[width=\textwidth]{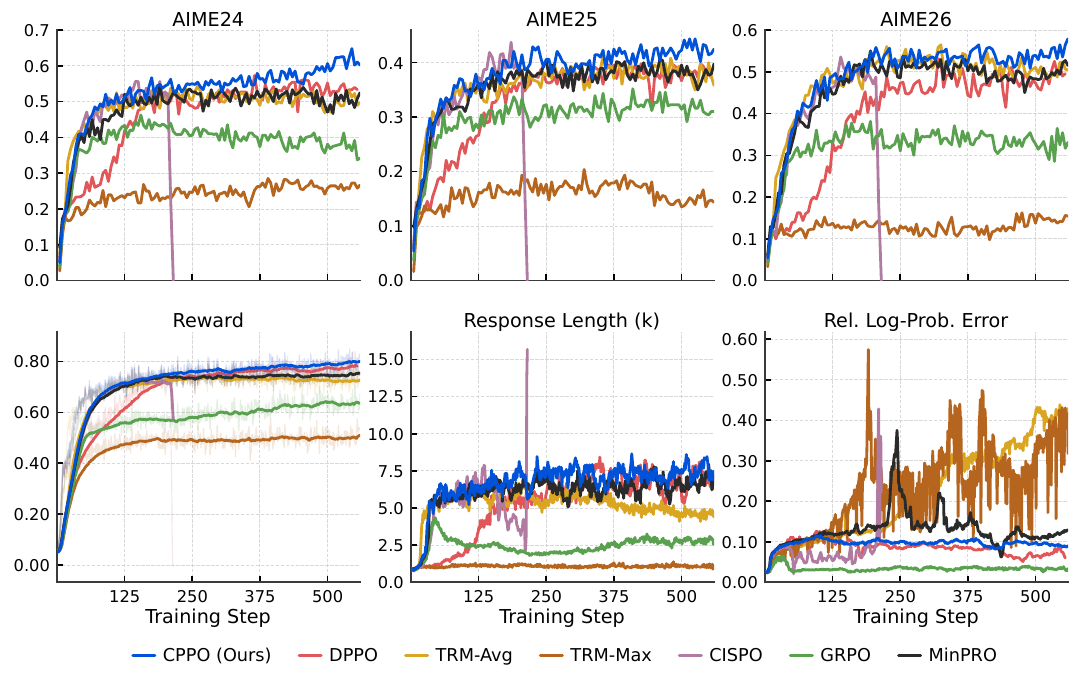}
\caption{Complete training diagnostics for Qwen3-30B-A3B-Base.
Top: AIME24, AIME25, and AIME26 validation $\mathrm{Avg}@16$.
Bottom: training reward, response length, and relative
log-probability error. Only reward is smoothed for readability.}
\label{fig:app-complete-a3b-base}
\end{figure}

\FloatBarrier

\begin{table}[!ht]
\centering
\caption{\textbf{Detailed per-benchmark evaluation results.} This table
expands the aggregate results from Table~\ref{tab:main}
(\%, $\mathrm{Avg}@16$). \textbf{AVG} denotes the best AIME24/25/26
$\mathrm{Avg}@16$ within the matched evaluation window; the
per-benchmark columns report the AIME24/AIME25/AIME26 scores at this
best-average checkpoint, not their individual maxima.
\textit{collapse} indicates training divergence, which only occurred
for \CISPO{} on Qwen3-30B-A3B-Base at step~215. Within each model
block, the best score in a column is in \textbf{bold} and the
second-best is \underline{underlined}.}
\label{tab:main-full}
\small
\resulttablespacing
\begin{tabularx}{\textwidth}{l *{4}{C}}
\toprule
\textbf{Method} & \textbf{AIME24} & \textbf{AIME25} & \textbf{AIME26} & \textbf{AVG} ($\uparrow$) \\
\midrule
\rowcolor{grouphl}
\multicolumn{5}{l}{\textit{Qwen3-1.7B} \quad ($T^{\mathrm{stop}}=1120$)}\\
\GRPO{}                & 30.41                       & 28.75                       & 24.58                       & 27.91             \\
\MinPro{}              & 26.87                       & \tbest{32.71}               & 23.54                       & 27.71             \\
\CISPO{}               & 30.62                       & 28.12                       & \tsecond{27.71}             & \tsecond{28.82}   \\
\DPPO{}                & \tsecond{31.04}             & 28.22                       & 25.31                       & 28.19             \\
\TRM{}-Max             & 26.46                       & 22.91                       & 26.25                       & 25.21             \\
\TRM{}-Avg             & 29.79                       & 25.41                       & 25.41                       & 26.87             \\
\rowcolor{ourshl}
\CPPO{} (ours)         & \tbest{34.79}               & \tsecond{30.63}             & \tbest{30.21}               & \tbest{31.88}     \\
\midrule
\rowcolor{grouphl}
\multicolumn{5}{l}{\textit{Qwen3-1.7B-Base} \quad ($T^{\mathrm{stop}}=680$)}\\
\GRPO{}                & 11.25                       & \phantom{0}6.87             & \phantom{0}8.54             & \phantom{0}8.89   \\
\MinPro{}              & \tbest{16.04}               & 10.41                       & \phantom{0}6.66             & 11.04             \\
\CISPO{}               & \tsecond{15.62}             & \tsecond{12.08}             & \phantom{0}7.91             & \tsecond{11.87}   \\
\DPPO{}                & 13.96                       & \phantom{0}8.33             & \tsecond{10.41}             & 10.90             \\
\TRM{}-Max             & 14.16                       & \phantom{0}7.29             & \phantom{0}7.71             & \phantom{0}9.72   \\
\TRM{}-Avg             & 15.41                       & 10.21                       & \phantom{0}9.48             & 11.70             \\
\rowcolor{ourshl}
\CPPO{} (ours)         & 15.21                       & \tbest{12.71}               & \tbest{10.42}               & \tbest{12.78}     \\
\midrule
\rowcolor{grouphl}
\multicolumn{5}{l}{\textit{Qwen3-8B-Base} \quad ($T^{\mathrm{stop}}=490$)}\\
\GRPO{}                & 25.21                       & 23.75                       & 22.91                       & 23.96             \\
\MinPro{}              & \tbest{35.21}               & 26.04                       & 27.91                       & \tsecond{29.72}   \\
\CISPO{}               & 32.91                       & 25.41                       & \tbest{30.41}               & 29.58             \\
\DPPO{}                & 30.21                       & \tsecond{26.87}             & \tsecond{29.58}             & 28.89             \\
\TRM{}-Max             & 28.33                       & 24.37                       & 27.50                       & 26.73             \\
\TRM{}-Avg             & 31.46                       & 25.00                       & 27.50                       & 27.98             \\
\rowcolor{ourshl}
\CPPO{} (ours)         & \tsecond{34.58}             & \tbest{30.00}               & 28.75                       & \tbest{31.11}     \\
\midrule
\rowcolor{grouphl}
\multicolumn{5}{l}{\textit{Qwen3-30B-A3B-Base} \quad ($T^{\mathrm{stop}}=560$)}\\
\GRPO{}                & 43.75                       & 32.71                       & 38.12                       & 38.19             \\
\MinPro{}              & 52.91                       & 38.54                       & 52.91                       & 48.12             \\
\CISPO{}               & \textit{collapse}           & \textit{collapse}           & \textit{collapse}           & \textit{collapse} \\
\DPPO{}                & \tsecond{57.29}             & \tsecond{39.58}             & 50.83                       & \tsecond{49.23}   \\
\TRM{}-Max             & 26.46                       & 20.41                       & 13.96                       & 20.27             \\
\TRM{}-Avg             & 54.37                       & 37.08                       & \tsecond{55.41}             & 48.96             \\
\rowcolor{ourshl}
\CPPO{} (ours)         & \tbest{64.79}               & \tbest{43.13}               & \tbest{56.46}               & \tbest{54.79}     \\
\bottomrule
\end{tabularx}
\end{table}

\FloatBarrier

\end{document}